\documentclass{article}

\usepackage{microtype}
\usepackage{graphicx}
\usepackage{subcaption}
\usepackage{booktabs} 

\usepackage{hyperref}

\newcommand{\sign}{\operatorname{sign}}

\usepackage[accepted]{icml2026}

\usepackage{amsmath}
\usepackage{amssymb}
\usepackage{mathtools}
\usepackage{amsthm}

\usepackage{tabularx}
\usepackage{array} 

\usepackage{amsmath}
\DeclareMathOperator{\sech}{sech}

\usepackage[capitalize,noabbrev]{cleveref}

\theoremstyle{plain}
\newtheorem{theorem}{Theorem}[section]
\newtheorem{proposition}[theorem]{Proposition}

\theoremstyle{definition}

\theoremstyle{remark}

\usepackage[textsize=tiny]{todonotes}

\icmltitlerunning{Metric–-Phase Fields: Decoupling Distance and Sign for Thin-Structure Reconstruction from Unoriented Point Clouds}

\begin{document}

\twocolumn[
  \icmltitle{Metric–-Phase Fields: Decoupling Distance and Sign for Thin-Structure Reconstruction from Unoriented Point Clouds}



  \icmlsetsymbol{equal}{*}

  \begin{icmlauthorlist}
    \icmlauthor{Jiayi Kong}{ntu}
    \icmlauthor{Xuhui Chen}{iscas,ucas}
    \icmlauthor{Chen Zong}{nuaa}
    \icmlauthor{Fei Hou}{iscas,ucas}
    \icmlauthor{Junhui Hou}{cityu}
    \icmlauthor{Wenping Wang}{tamu}
    \icmlauthor{Ying He}{ntu}
\end{icmlauthorlist}

\icmlaffiliation{ntu}{S-Lab, Nanyang Technological University, Singapore}

\icmlaffiliation{iscas}{Key Laboratory of System Software (CAS), Institute of Software, Chinese Academy of Sciences, China}

\icmlaffiliation{ucas}{University of Chinese Academy of Sciences, China}

\icmlaffiliation{nuaa}{School of Mathematics, Nanjing University of Aeronautics and Astronautics, China}

\icmlaffiliation{cityu}{Department of Computer Science, City University of Hong Kong, Hong Kong SAR, China}

\icmlaffiliation{tamu}{Department of Computer Science and Engineering, Texas A\&M University, USA}

  \icmlcorrespondingauthor{Ying He}{yhe@ntu.edu.sg}

  \icmlkeywords{Machine Learning, ICML}

  \vskip 0.3in
]



\printAffiliationsAndNotice{}  

\begin{abstract}
Neural Signed Distance Functions (SDFs) excel at reconstructing watertight manifolds but fail on thin structures and open boundaries due to strict inside--outside constraints. Conversely, Unsigned Distance Fields (UDFs) accommodate general geometries but suffer from gradient singularities at the zero-level set, hindering optimization and extraction.
We introduce Metric–-Phase Fields (MPFs), a decoupled implicit representation that separates metric proximity from topological phase. Given an unoriented point cloud, MPFs learn (i) an unsigned metric field $r$ and (ii) a smooth phase field $\theta$, for which we derive a bounded phase indicator $P=\tanh(\beta\theta)$ that provides soft inside–outside cues where they are meaningful. We couple the two fields via a gated-metric formulation with a residual phase injection to obtain a signed implicit function with stable near-surface gradients. The phase coefficient $\beta$ is learnable, allowing MPFs to adaptively control the sharpness of the phase transition and the degree of saturation of the soft sign indicator. Experiments on both synthetic and scanned thin-shell and thin-plate shapes demonstrate that MPFs preserve thin and layered structures more faithfully than recent SDF-based methods, while also enabling more robust training and more reliable surface extraction than UDF-based approaches. Check out \href{https://github.com/JIAYI-Scarlett/ICML2026-MPF}{MPFs-GitHub} for source code and test models.
\end{abstract}

\section{Introduction}
\label{sec:intro}

Reconstructing high-fidelity 3D surfaces from raw, unoriented point clouds is a fundamental problem in geometry processing and 3D vision, with applications in virtual reality, robotics, computer-aided design, and digital content creation. Neural implicit representations have recently emerged as a dominant paradigm: by learning a continuous function over $\mathbb{R}^3$, they enable accurate surface reconstruction via iso-surface extraction while providing differentiable geometry for downstream tasks.

Among neural implicit representations, signed distance functions (SDFs) are particularly popular because they explicitly encode inside--outside information and allow stable surface extraction (e.g., via Marching Cubes~\cite{Lorensen1987} or Dual Contouring~\cite{dualcontouring}) from the zero level set. 
However, the success of SDF-based reconstruction pipelines relies on a key geometric assumption: the target shape is a watertight manifold that induces a globally consistent inside--outside partition of space. This assumption is frequently violated in practice. Real-world objects and man-made models often contain thin structures (e.g., fins, petals, and sheet-like parts), surfaces with boundary, and non-manifold configurations. Moreover, for such thin structures, scanning pipelines often produce single-layer point clouds, where samples lie on only one side of a thin sheet (thin-plate geometry), rather than on both sides of a thin wall (thin-shell geometry).\footnote[2]{We distinguish geometry from sampling.
\emph{Thin-shell} refers to thin-walled solids whose boundary contains two nearby sheets (small thickness).
\emph{Thin-plate} refers to single-sheet surfaces (often with boundary or attached non-manifoldly), where a consistent inside/outside label may be ill-posed.
\emph{Single-layer point clouds} describe a sampling regime in which points are observed on only one side of a locally thin structure, regardless of whether the underlying geometry is thin-shell or thin-plate.}

Thin and open geometries challenge SDF-based reconstruction in two complementary ways.
First, for sheet-like structures and surfaces with boundary, the notions of interior and exterior can be ambiguous: a single surface sheet does not uniquely partition space, making a globally consistent sign difficult (or ill-posed) to define. As a result, SDF-based methods often implicitly close or solidify such structures, leading to artifacts such as thickness inflation, spurious closures, or missing sheets. Second, even for watertight shapes, thin shells contain multiple nearby surface layers separated by small gaps. In this setting, learning a single scalar field that must simultaneously capture accurate distances and a stable sign becomes ill-conditioned. When the network over-smooths or predicts inconsistent signs, it may merge layers that should remain separate, fill in thin cavities, or blur fine-scale details.

Unsigned distance fields (UDFs) provide a different perspective. By eliminating the need for a global sign, UDFs naturally accommodate surfaces with boundary and thin multi-layered structures. However, UDFs introduce their own challenges for neural learning and surface extraction.
Because UDFs lack sign information for localizing the zero-level set, the extracted meshes often remain suboptimal even when utilizing additional procedures such as dedicated optimization~\cite{hou2023dcudf} or auxiliary information (e.g., gradients~\cite{guillard2022udf, ren2023geoudf}).

We propose Metric--Phase Fields (MPFs), a decoupled implicit representation designed to combine the complementary strengths of SDFs and UDFs.
The core idea is to separate metric proximity from sign/phase information: given an unoriented point cloud, MPFs learn
(i) an unsigned metric field $r(\mathbf{x})$ that focuses on accurate proximity (UDF-like), and
(ii) a smooth phase field $\theta(\mathbf{x})$ that encodes a soft inside--outside preference where such a notion is meaningful.
From $\theta$, we derive a bounded phase indicator
$
P(\mathbf{x})=\tanh(\beta\,\theta(\mathbf{x})),
$
where the learnable coefficient $\beta$ adaptively controls the sharpness of the phase transition and the saturation of the soft sign cue.
We then couple the two fields via a gated-metric formulation with residual phase injection, producing a signed implicit function with stable near-surface gradients.
This design enables reliable learning of the sign transition without compromising metric accuracy.
We analyze the theoretical properties of MPFs and introduce a set of loss terms for learning them directly from unoriented point clouds.
Experiments on both synthetic and scanned thin-shell and thin-plate shapes demonstrate that MPFs preserve thin and layered structures more faithfully than recent SDF-based methods, while also enabling more robust training and more reliable surface extraction than UDF-based approaches.

\section{Related Work}
\label{sec:relatedwork}

Surface reconstruction from point clouds has been studied for decades, spanning classical computational geometry, variational implicit methods, and, more recently, deep learning. Due to space constraints, we focus on the most relevant works on implicit approaches (both classical and neural) and refer the reader to recent surveys~\cite{cgf-survey,huang2024surface} for a broader overview. At a high level, implicit reconstruction methods can be grouped into signed representations, which rely on an inside--outside partition, and unsigned representations, which avoid global sign but often require additional machinery for surface extraction. 

\subsection{Signed Representations}
\label{subsec:signedrepresentations}

Early work approximated a signed distance field using local tangent-plane estimates at closest points~\cite{hoppe1992surface}.
Radial basis function (RBF) approaches~\cite{carr2001reconstruction,VIPSS} fit an oriented point set with a globally smooth implicit function. 
The Poisson Surface Reconstruction (PSR) family, including PSR~\cite{Kazhdan2006PoissonSR}, sPSR~\cite{kazhdan2013screened}, and iPSR~\cite{hou2022iterative}, is popular due to their efficiency and effectiveness in capturing fine geometric details. A complementary line of work derives sign from generalized winding number (GWN)~\cite{jacobson2013winding}. Recent GWN-based methods, such as GCNO~\cite{xu2023globally}, BIM~\cite{liu2024consistent}, WNNC~\cite{Lin2024fast}, and DWG~\cite{liu2025dwg}, have demonstrated strong robustness for watertight manifold surfaces from unoriented or defective point sets. 

Neural signed representations, in particular neural SDFs, have also been explored extensively due to their expressive power and the availability of effective geometric regularizers, such as the first-order Eikonal constraint and higher-order Hessian-based priors (e.g., singular Hessian constraints). These priors encourage distance-like behavior and stable gradients, making SDFs a popular choice for learning-based surface reconstruction and downstream geometric optimization, as demonstrated in representative methods, such as Neural-Pull~\cite{NeuralPull}, DiGS~\cite{ben2022digs}, StEik~\cite{yang2023steik}, NSH~\cite{wang2023neural}, NeuralGF~\cite{li2023neuralgf}, and NeuralCADRecon~\cite{Dong2024NeurCADRecon}, among others.  
Points2Surf~\cite{erler2020points2surf} further explores joint learning of signed distance and surface reconstruction from point clouds by decoding both sign and distance information from a shared feature representation. However, coupling sign and metric prediction within a unified latent space may lead to inconsistencies near the surface, where precise geometric alignment is required for accurate reconstruction.

Another widely used neural implicit representation is occupancy fields~\cite{Occupancy_Networks}, which model a shape as a continuous classifier $o(\mathbf{x})\in[0,1]$ indicating whether $\mathbf{x}$ lies inside the object.
Surfaces are then extracted as an iso-contour $o(\mathbf{x})=0.5$ using standard iso-surfacing. ConvONet~\cite{Peng2020ECCV} improves  reconstruction quality by combining convolutional encoders with implicit occupancy decoders.
POCO~\cite{POCO} and ALTO~\cite{wang2023alto} further enhance local geometric detail via point-based convolutions and adaptive latent representations. 

Despite their success on watertight manifolds, signed representations are fundamentally challenged by surfaces with boundary, thin sheets, and non-manifold configurations, where a globally consistent inside--outside labeling can be ambiguous or ill-posed.
\subsection{Unsigned Representations}
\label{subsec:unsignedrepresentations}



Unlike occupancy or signed representations, unsigned distance fields discard sign information and therefore provide a more general formulation for modeling open surfaces, non-manifold structures, and geometries without a globally consistent inside--outside partition. NDF~\cite{chibane2020neural} learns a UDF from input point clouds using an MLP. GeoUDF~\cite{ren2023geoudf} improves UDF quality and gradient estimation through geometric guidance, but incurs substantial memory overhead. Inspired by Neural-Pull~\cite{NeuralPull}, CAP-UDF~\cite{zhou2024cap} introduces field-consistency losses, while LevelSetUDF~\cite{Zhou2023LearningAM} enforces continuity through level-set projection. More specialized formulations, such as UODF~\cite{lu2024unsigned} and DUDF~\cite{Fainstein2024DUDF}, introduce orthogonal or hyperbolic distance scalings to improve representational flexibility. LoSF-UDF~\cite{DBLP:conf/cvpr/HuLHHZWL025} leverages local shape functions for UDF learning, and DEUDF~\cite{Xu2024DEUDF} incorporates normal supervision to enhance detail preservation. Beyond neural UDF learning, VAD~\cite{kong2025vad} proposes a lightweight, network-free approach that computes UDFs from unoriented point clouds by aligning bidirectional normals with Voronoi-based criteria, diffusing the resulting normal field, and integrating it into a UDF. Despite these advances, extracting high-quality surfaces from UDFs remains challenging, often leading to holes~\cite{zhou2024cap, ren2023geoudf, guillard2022udf}, double-layer surfaces~\cite{hou2023dcudf,dcudf2}, or unwanted non-manifold artifacts~\cite{zhang2023dualmeshudf, chen2025mind}.



\section{Metric--Phase Fields}
\label{sec:mpfs}

Let $\mathbf{x}\in\mathbb{R}^3$ denote a query location.
Let $\mathcal{S}\subset\mathbb{R}^3$ be the (unknown) target surface and let $\mathcal{X}=\{\mathbf{p}_i\}$ be the input unoriented point cloud sampled from $\mathcal{S}$.
We learn a metric field $r(\mathbf{x})$, a phase field $\theta(\mathbf{x})$, and use their combination $\phi(\mathbf{x})$ as the signed implicit function.

\subsection{Motivations}
\label{subsec:motivations}

Conventional SDF-based reconstruction represents a surface using a single scalar field that is expected to encode both (i) metric proximity to the surface and (ii) a globally consistent inside--outside sign. This coupling is well-suited to watertight, two-sided manifold surfaces, where a global inside--outside partition exists and the signed distance exhibits stable, well-behaved gradients near the surface. However, many practical inputs violate these assumptions. For shapes containing open boundaries, non-manifold configurations, or thin-plate (sheet-like) structures, the notion of a global sign becomes ambiguous or ill-posed. In such cases, forcing a single field to simultaneously satisfy distance-like behavior and sign consistency often leads to reconstruction artifacts (Figure~\ref{fig:ambiguity}).

\begin{figure}[t]
    \centering
    	\includegraphics[width=\linewidth]{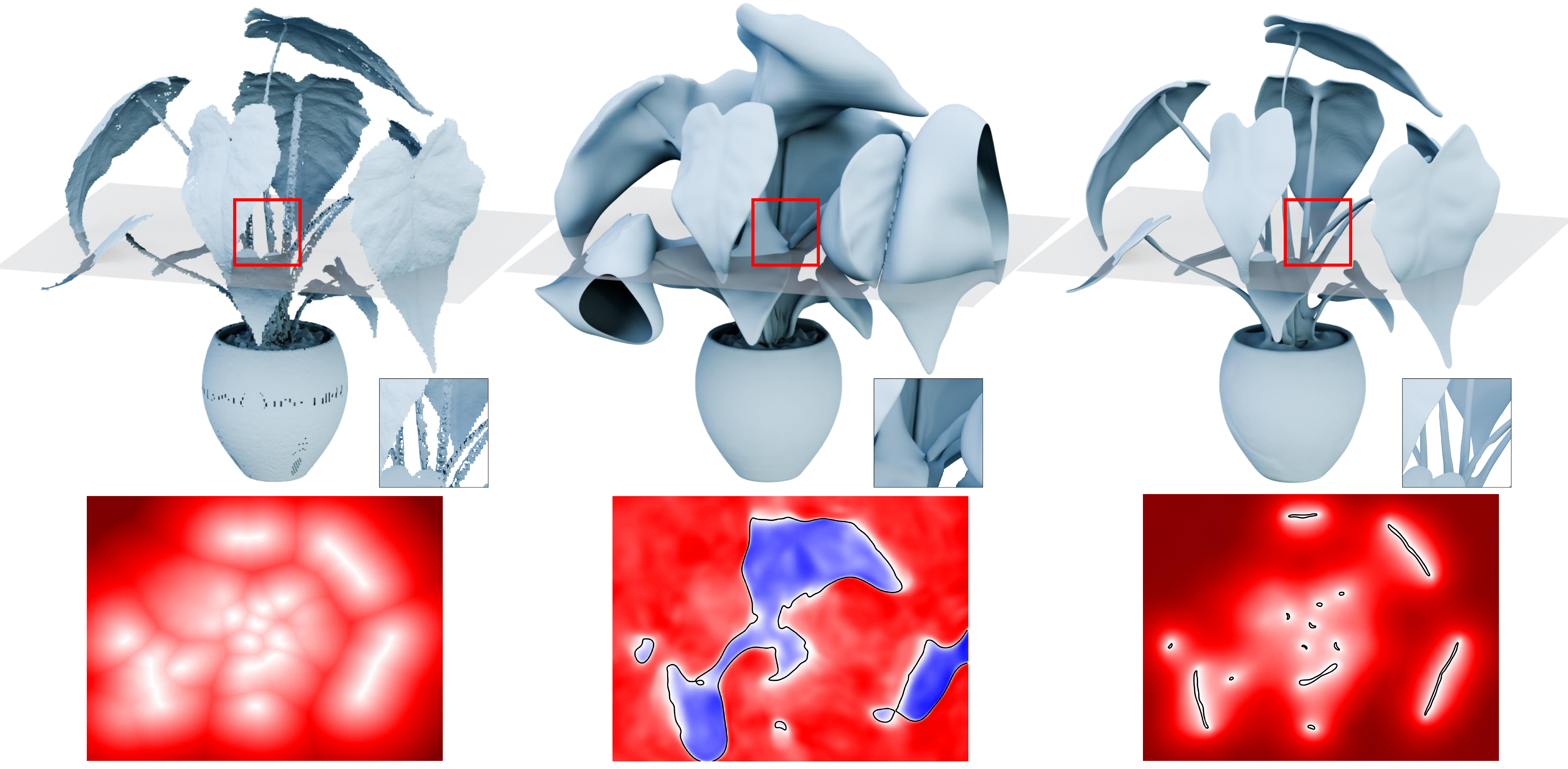}\\
        \makebox[0.32\linewidth][c]{UDF}
    \makebox[0.32\linewidth][c]{SDF}
    \makebox[0.32\linewidth][c]{MPF}
    \caption{Challenges of SDFs and UDFs in surface reconstruction. When the inside–outside sign is ambiguous (e.g., in thin plates, boundary regions, or non-manifold configurations), learning a single SDF becomes ill-conditioned and may introduce artifacts such as inflated thickness. In contrast, UDFs often suffer from optimization difficulties and unstable zero level set extraction due to non-differentiability at $u=0$ and the lack of sign information. Black lines in the 2D slices indicate the zero-level set.}
    \label{fig:ambiguity}
\end{figure}

\subsection{Decomposition of Metric and Phase}
\label{subsec:decomposition}

To address this issue, we decouple geometric proximity and sign/phase information into two fields with complementary roles:
a \emph{Metric Field} $r(\mathbf{x})$ that captures unsigned proximity, and a \emph{Phase Field} $\theta(\mathbf{x})$ that provides a soft interior/exterior preference where such a notion is meaningful.

\paragraph{Metric Field.}
The metric field $r(\mathbf{x})$ is trained to behave as an unsigned, UDF-like distance and is regularized using an Eikonal constraint on off-surface samples:
\begin{equation}
\|\nabla r(\mathbf{x})\|_2 = 1.
\end{equation}
We apply this constraint away from the interface (i.e., excluding points with $r(\mathbf{x}) \approx 0$), since a true unsigned distance field is generally non-differentiable at its zero level set due to cusp-like behavior.

To ensure a valid unsigned metric, we enforce non-negativity by construction. Specifically, we parameterize $r$ using a softplus activation function, which guarantees $r(\mathbf{x}) \ge 0$ and maintains smoothness of the metric branch, while still allowing $r(\mathbf{x})$ to approach zero on surface samples under the data constraints.
    
\paragraph{Phase field.}
The phase field $\theta(\mathbf{x})$ acts as a continuous latent phase descriptor.
Unlike discrete occupancy/sign classification, we derive a soft phase indicator $P(\mathbf{x})$ through a hyperbolic tangent:
\begin{equation}
P(\mathbf{x}) = \tanh\!\big(\beta\,\theta(\mathbf{x})\big),
\label{eq:phase_indicator}
\end{equation}
where $P(\mathbf{x})\in(-1,1)$ provides a soft phase label (inside/outside preference) and $\beta>0$ is a learnable scaling factor.
Modeling $\theta(\mathbf{x})$ as a continuous field avoids the instability associated with discrete sign switching.

This separation allows each component to be trained with objectives aligned to its role, while their combination yields a signed implicit function with improved robustness on challenging geometries.


\subsection{Residual Gradient Injection}
\label{subsec:residual_injection}

We combine the metric and phase components using a gated-metric formulation with a residual phase injection:
\begin{equation}
\label{eq:gated_residual}
\phi(\mathbf{x}) = r(\mathbf{x})\,P(\mathbf{x}) + \theta(\mathbf{x}),
\end{equation}
where $P(\mathbf{x})$ gates the metric term and $\theta(\mathbf{x})$ serves as an additive residual that stabilizes optimization near the interface.


In our formulation, the reconstructed surface is obtained as the zero level set $\{\mathbf{x}:\phi(\mathbf{x})=0\}$; in particular, we enforce $\phi(\mathbf{p})=0$ on input samples $\mathbf{p}\in\mathcal{X}$. In the following, we show that the zero level set of $\phi$ coincides with the zero level set of $\theta$, and they share the same gradients at constrained samples.

\begin{proposition}[Sign consistency]
\label{prop:sign_consistency}
Let $P(\mathbf{x})=\tanh(\beta\,\theta(\mathbf{x}))$ with $\beta>0$ and $\phi=rP+\theta$.
Then for any $\mathbf{x}$ with $\theta(\mathbf{x})\neq 0$,
\begin{equation}
\sign\!\big(\phi(\mathbf{x})\big)=\sign\!\big(\theta(\mathbf{x})\big),
\label{eq:sign_consistency}
\end{equation}
and moreover
\begin{equation}
\phi(\mathbf{x})=0 \quad\Longleftrightarrow\quad \theta(\mathbf{x})=0.
\label{eq:zero_set_equiv}
\end{equation}
\end{proposition}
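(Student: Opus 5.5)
The plan is to show that $\phi$ is $\theta$ multiplied by a strictly positive factor whenever $\theta(\mathbf{x})\neq 0$, using two facts. The metric branch is nonnegative by construction: $r(\mathbf{x})\ge 0$ because of the softplus parameterization in the metric field definition. And $\tanh$ is odd and strictly increasing with $\tanh(0)=0$, so for $\beta>0$ we have $\sign(P(\mathbf{x}))=\sign(\theta(\mathbf{x}))$.

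\textbf{Step 1.} Fix $\mathbf{x}$ with $\theta(\mathbf{x})\neq 0$ and write
\begin{equation*}
\phi(\mathbf{x}) = \theta(\mathbf{x})\Big(1 + r(\mathbf{x})\,\frac{\tanh(\beta\theta(\mathbf{x}))}{\theta(\mathbf{x})}\Big).
\end{equation*}
The ratio $\tanh(\beta t)/t$ is strictly positive for every $t\neq 0$ when $\beta>0$, since numerator and denominator share the same sign. Together with $r\ge 0$, the bracket is at least $1$. Hence $\sign(\phi(\mathbf{x}))=\sign(\theta(\mathbf{x}))$, which is \eqref{eq:sign_consistency}. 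As a useful byproduct we get $|\phi(\mathbf{x})|\ge|\theta(\mathbf{x})|$.

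\textbf{Step 2.} For the zero-set equivalence \eqref{eq:zero_set_equiv}, the direction ``$\Leftarrow$'' is direct. If $\theta(\mathbf{x})=0$, then $P(\mathbf{x})=\tanh(0)=0$, so $\phi(\mathbf{x})=r(\mathbf{x})\cdot 0+0=0$, whatever the value of $r$. The direction ``$\Rightarrow$'' is the contrapositive of Step 1. If $\theta(\mathbf{x})\neq 0$, then $\phi(\mathbf{x})$ has the nonzero sign of $\theta(\mathbf{x})$, or equivalently $|\phi|\ge|\theta|>0$, so $\phi(\mathbf{x})\neq 0$.

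There is no real obstacle here. The only point needing care is that nonnegativity of $r$ is essential. If $r$ were allowed to be negative and large, the gated term could outweigh the residual $\theta$ and flip the sign. So I will explicitly invoke the softplus construction, which gives $r\ge 0$ and in fact $r>0$, together with the assumption $\beta>0$. I will also remark that the argument holds pointwise and needs no smoothness. Smoothness matters only for the subsequent gradient claim, where differentiating $\phi=rP+\theta$ at a point with $\theta=0$ gives $\nabla\phi=(1+\beta r)\nabla\theta$, because $P=0$ there and $\nabla P=\beta\,\mathrm{sech}^2(0)\nabla\theta=\beta\nabla\theta$.
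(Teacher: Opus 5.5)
Your proof is correct and follows essentially the same route as the paper: both rest on $r\ge 0$ and $\sign(P)=\sign(\theta)$ for $\beta>0$. The paper phrases this as $\phi$ being a sum of two same-sign terms, while you write $\phi$ as $\theta$ times a factor at least $1$, which also gives the slightly stronger $|\phi|\ge|\theta|$; the zero-set equivalence is then handled the same way in both.
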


\begin{proof}
Since $\tanh(\cdot)$ is odd and strictly increasing and $\beta>0$, we have
$\sign(P(\mathbf{x}))=\sign(\theta(\mathbf{x}))$ and $P(\mathbf{x})=0 \Leftrightarrow \theta(\mathbf{x})=0$.
Because $r(\mathbf{x})\ge 0$ (by our construction), the product $r(\mathbf{x})P(\mathbf{x})$ has the same sign as $P(\mathbf{x})$ (or is zero).
Thus, whenever $\theta(\mathbf{x})\neq 0$ (equivalently $P(\mathbf{x})\neq 0$),
the two terms $\theta(\mathbf{x})$ and $r(\mathbf{x})P(\mathbf{x})$ share the same sign, so
$\phi(\mathbf{x})=\theta(\mathbf{x})+r(\mathbf{x})P(\mathbf{x})$ cannot change sign relative to $\theta(\mathbf{x})$.
Finally, $\phi(\mathbf{x})=0$ implies $\theta(\mathbf{x})=0$, and the reverse direction holds because $\theta(\mathbf{x})=0$ gives $P(\mathbf{x})=0$ and hence $\phi(\mathbf{x})=0$.
\end{proof}

Proposition~\ref{prop:sign_consistency} implies that under $r(\mathbf{x})\ge 0$ and $\beta>0$, the zero level set of $\phi$ coincides with that of the phase field, i.e., $\{\phi=0\}=\{\theta=0\}$.
Importantly, $\phi(\mathbf{x})=0$ does \emph{not} imply $r(\mathbf{x})=0$:
when $\theta(\mathbf{x})=0$ we have $P(\mathbf{x})=\tanh(\beta\theta(\mathbf{x}))=0$, and thus $\phi(\mathbf{x})=0$ regardless of the value of $r(\mathbf{x})$.
For this reason, during training we explicitly enforce \emph{both} constraints on samples $\mathbf{p}\in\mathcal{X}$ by penalizing deviations of $\phi(\mathbf{p})$ and $r(\mathbf{p})$ from zero (see $\mathcal{L}_{\text{zero}}$ in Section~\ref{subsec:constraints}).

\begin{proposition}[Gradient agreement at constrained samples]
\label{prop:grad_agree}
For any point $\mathbf{p}$ satisfying $r(\mathbf{p})=0$ and $\phi(\mathbf{p})=0$
(in particular, for training samples $\mathbf{p}\in\mathcal{X}$ where both constraints are enforced),
we have
\begin{equation}
\nabla\phi(\mathbf{p})=\nabla\theta(\mathbf{p}).
\label{eq:grad_phi_on_surface}
\end{equation}
\end{proposition}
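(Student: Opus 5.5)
The plan is a direct product-rule computation. We combine it with Proposition~\ref{prop:sign_consistency} to pin down the value of $P$ at $\mathbf{p}$.

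Assume $r$ and $\theta$ are differentiable at $\mathbf{p}$. This holds for $\theta$ as a smooth network output, and for $r$ because it is parameterized through a softplus of a smooth network. Then $P=\tanh(\beta\theta)$ is differentiable with $\nabla P=\beta\,\sech^2(\beta\theta)\,\nabla\theta$. Differentiating \eqref{eq:gated_residual} gives
\begin{equation*}
\nabla\phi(\mathbf{x}) = P(\mathbf{x})\,\nabla r(\mathbf{x}) + r(\mathbf{x})\,\beta\,\sech^2\!\big(\beta\theta(\mathbf{x})\big)\,\nabla\theta(\mathbf{x}) + \nabla\theta(\mathbf{x}).
\end{equation*}

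Now evaluate this at $\mathbf{p}$.

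The second term vanishes because $r(\mathbf{p})=0$ and the factor $\beta\,\sech^2(\cdot)\le\beta$ is bounded.

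For the first term, use $\phi(\mathbf{p})=0$. By \eqref{eq:zero_set_equiv} of Proposition~\ref{prop:sign_consistency}, this gives $\theta(\mathbf{p})=0$, hence $P(\mathbf{p})=\tanh(0)=0$. So $P(\mathbf{p})\nabla r(\mathbf{p})=0$, provided $\nabla r(\mathbf{p})$ is finite.

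Only $\nabla\theta(\mathbf{p})$ remains, which is the claim \eqref{eq:grad_phi_on_surface}. Only the hypothesis $\phi(\mathbf{p})=0$ is needed to kill the first term, and only $r(\mathbf{p})=0$ for the second.

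The one delicate point is differentiability of $r$ at $\mathbf{p}$. A true unsigned distance has a cusp on its zero set, which is exactly why the Eikonal term is applied only off-surface. I would handle this by working with the network parameterization of $r$, which is smooth everywhere. Strictly, a softplus output never equals $0$ exactly, so the hypothesis $r(\mathbf{p})=0$ should be read as the idealized limit enforced by $\mathcal{L}_{\text{zero}}$.

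If one wants robustness to that limit, a weaker version of the argument suffices. Since $P(\mathbf{p})=0$ exactly, it is enough that $\nabla r$ is bounded near $\mathbf{p}$, which is automatic for the smooth branch. The residual deviation is then bounded by $\beta\, r(\mathbf{p})\,\|\nabla\theta(\mathbf{p})\|$, which is zero under the stated hypotheses.
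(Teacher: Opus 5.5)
Your proposal is correct and follows the paper's proof essentially verbatim. You differentiate $\phi=rP+\theta$ by the product rule, use Proposition~\ref{prop:sign_consistency} to get $\theta(\mathbf{p})=0$ and hence $P(\mathbf{p})=0$, and let $r(\mathbf{p})=0$ kill the remaining $\beta\,\sech^2$ term. Your two extra remarks are accurate refinements the paper does not state: that $\phi(\mathbf{p})=0$ together with $r\ge 0$ already forces $P(\mathbf{p})=0$, and that for merely small $r(\mathbf{p})$ the discrepancy is exactly $\nabla\phi(\mathbf{p})-\nabla\theta(\mathbf{p})=\beta\,r(\mathbf{p})\,\nabla\theta(\mathbf{p})$.
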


\begin{proof}
Differentiating \eqref{eq:gated_residual} gives
\begin{align}
&\nabla \phi(\mathbf{x})
= P(\mathbf{x})\,\nabla r(\mathbf{x})
 + r(\mathbf{x})\,\nabla P(\mathbf{x})
 + \nabla \theta(\mathbf{x}) \nonumber\\
&= P(\mathbf{x})\nabla r(\mathbf{x}) + \left(1 + r(\mathbf{x})\beta\sech^2\left(\beta\theta(\mathbf{x})\right)\right)
\nabla\theta(\mathbf{x}),\nonumber
\label{eq:grad_phi_decomp}
\end{align}
where we used $\nabla P(\mathbf{x})=\beta\,\sech^2(\beta\,\theta(\mathbf{x}))\,\nabla\theta(\mathbf{x})$.
At a constrained sample $\mathbf{p}$, $r(\mathbf{p})=0$ and $\phi(\mathbf{p})=0$ imply $\theta(\mathbf{p})=0$, hence $P(\mathbf{p})=0$.
Substituting into the above equation yields $\nabla\phi(\mathbf{p})=\nabla\theta(\mathbf{p})$.
\end{proof}

Proposition~\ref{prop:grad_agree} shows that at surface samples (i.e., samples on the observed surface where loss terms drive $r(\mathbf{p})$ and $\theta(\mathbf{p})$ towards zero), the composite field $\phi$ and the phase field $\theta$ agree to first order at convergence.
Specifically, for a regular implicit surface point (i.e., $\nabla\phi(\mathbf{p})\neq \mathbf{0}$), the surface normal is given by
$\mathbf{n}_\phi(\mathbf{p})=\nabla\phi(\mathbf{p})/\|\nabla\phi(\mathbf{p})\|_2$.
Therefore, at constrained samples, the level sets $\widehat{\mathcal{S}}_\phi=\{\mathbf{x}:\phi(\mathbf{x})=0\}$ and $\widehat{\mathcal{S}}_\theta=\{\mathbf{x}:\theta(\mathbf{x})=0\}$ share the same normal (and thus the same tangent plane) at $\mathbf{p}$:
\[
\mathbf{n}_\phi(\mathbf{p})
=
\frac{\nabla\phi(\mathbf{p})}{\|\nabla\phi(\mathbf{p})\|_2}
=
\frac{\nabla\theta(\mathbf{p})}{\|\nabla\theta(\mathbf{p})\|_2}
=
\mathbf{n}_\theta(\mathbf{p}).
\]
Equivalently, the gated metric term is \emph{first-order inactive} at constrained samples.
Indeed, since $r(\mathbf{p})=0$ and $P(\mathbf{p})=0$, we have $(rP)(\mathbf{p})=0$ and
\[
\nabla(rP)(\mathbf{p})
=
P(\mathbf{p})\,\nabla r(\mathbf{p}) + r(\mathbf{p})\,\nabla P(\mathbf{p})
=
\mathbf{0}.
\]
Thus the gated metric does not alter the first-order interface geometry at these points; instead, it primarily shapes $\phi$ away from the interface by injecting metric proximity with a soft sign.
This observation explains how MPFs retain stable, well-defined interface normals (as in SDFs on smooth watertight surfaces), while still leveraging a UDF-like metric branch $r$ that remains meaningful in sign-ambiguous regions.

\subsection{Constraints and Regularization}
\label{subsec:constraints}

To learn MPFs from unoriented points, we design the following loss terms. 

\paragraph{Surface Data constraints.}
To ensure the implicit surface interpolates the input point cloud $\mathcal{X}=\{\mathbf{p}_i\}$, we impose zero-level consistency on both the metric and composite fields:
\begin{equation}
\mathcal{L}_{\text{zero}}
=
\sum_{\mathbf{p}\in\mathcal{X}}
\left(
r(\mathbf{p}) + |\phi(\mathbf{p})|
\right).
\label{eq:zero_level}
\end{equation}
While $\phi(\mathbf{p})=0$ enforces surface consistency of the composite field, the additional constraint $r(\mathbf{p})=0$ prevents metric drift and keeps the distance component geometrically anchored.

\paragraph{Gradient alignment (near-surface).}
We encourage the metric and phase fields to share consistent normal directions by aligning their \emph{gradient directions} in a near-surface band.
This links phase transitions to geometric distance while preserving the functional separation between distance magnitude ($r$) and phase ($\theta$).
For surfaces with boundary, this term is particularly important: standard SDF-based objectives often bias the solution toward a watertight inside--outside partition, which can introduce auxiliary zero level sets in free space.
For closed (watertight) surfaces, where a global sign is well-defined, the same loss serves as a stabilizing coupling regularizer that reduces local sign inconsistencies and helps $\theta$ track the geometric normals implied by $r$ under sparse/noisy sampling.
Formally, we use
\begin{equation}
\begin{aligned}
\mathcal{L}_{\text{align}}
=
\mathbb{E}_{\mathbf{x}}
\Big[
&w\left(r(\mathbf{x})\right)
\Big(
1 -
\Big|
\frac{\nabla r(\mathbf{x})}
{\|\nabla r(\mathbf{x})\|_2+\varepsilon}
\\
&\cdot
\frac{\nabla \theta(\mathbf{x})}
{\|\nabla \theta(\mathbf{x})\|_2+\varepsilon}
\Big|
\Big)
\Big],
\end{aligned}
\label{eq:align}
\end{equation}
where $w(r)=\exp(-\alpha r)$ concentrates the constraint to a narrow band around the surface, and the small constant $\varepsilon$ prevents numerical instability caused by vanishing gradients. The absolute inner product makes the constraint invariant to orientation, which is desirable for unoriented point clouds.

\paragraph{Metric Eikonal constraint (near-surface).}
For the metric component $r(\mathbf{x})$, we encourage it to approximate a Euclidean unsigned distance field.
A true UDF is generally not differentiable at the zero level set (the interface), due to cusp-like behavior analogous to an absolute value function.
Therefore, we do not enforce the Eikonal constraint exactly on surface samples.
Instead, we impose it on a narrow band of off-surface samples:
\begin{equation}
\mathcal{N}_\delta := \{\mathbf{x}\in\Omega : 0 < r(\mathbf{x}) < \delta\},
\end{equation}
\begin{equation}
\mathcal{L}_{\text{eik}}^{r}
=
\mathbb{E}_{\mathbf{x}\in\mathcal{N}_\delta}
\left[
\left|
\|\nabla r(\mathbf{x})\|_2 - 1
\right|
\right],
\label{eq:eik_r}
\end{equation}
where $\Omega$ denotes the spatial domain of interest. This term encourages $r(\mathbf{x})$ to grow linearly away from the geometry while avoiding the non-differentiable interface.

\paragraph{Topological Eikonal constraint (on-sample interface).}
In contrast to $r$, the composite field $\phi$ is designed to behave like a signed implicit function near the boundary.
On surface samples $\mathbf{p}\in\mathcal{X}$, we explicitly enforce $r(\mathbf{p})=0$ and $\phi(\mathbf{p})=0$, which implies $\theta(\mathbf{p})=0$ and yields $\nabla\phi(\mathbf{p})=\nabla\theta(\mathbf{p})$ (Proposition~\ref{prop:grad_agree}).
We therefore enforce a unit-norm constraint on the sampled interface:
\begin{equation}
\mathcal{L}_{\text{eik}}^{\phi}
 =
\mathbb{E}_{\mathbf{p}\in\mathcal{X}}
\left[
\left|
\|\nabla \phi(\mathbf{p})\|_2 - 1
\right|
\right]
+\mathbb{E}_{\mathbf{p}\in\mathcal{X}}
\left[
\left|
\|\nabla \theta(\mathbf{p})\|_2 - 1
\right|
\right].
\label{eq:eik_phi}
\end{equation}

\paragraph{Laplacian regularization.}
To suppress spurious oscillations and encourage simple field behavior away from the reconstructed surface, we apply a second-order smoothness prior to the composite field $\phi$.
Similar to DiGS~\cite{ben2022digs}, we regularize $\phi$ to be approximately harmonic in free space by penalizing the Laplacian magnitude:
\begin{equation}
\mathcal{L}_{\text{lap}}
=
\mathbb{E}_{\mathbf{x}\in\Omega}
\left[
\big|\Delta \phi(\mathbf{x})\big|
\right],
\label{eq:lap}
\end{equation}
where $\Delta\phi=\nabla\cdot\nabla\phi$ denotes the Laplacian. In practice, the Laplacian term is evaluated using sparse samples in the ambient space, which regularizes the global field behavior while minimizing interference with surface accuracy and thin structures.

\paragraph{Phase saturation constraint.}
While the phase field $\theta(\mathbf{x})$ is continuous, the derived soft phase indicator $P(\mathbf{x})=\tanh(\beta\,\theta(\mathbf{x}))$ should saturate toward $\{-1,+1\}$ in regions away from the interface.
To avoid ambiguous states where $P(\mathbf{x})\approx 0$ in free space (which can thicken or blur the transition), we impose:
\begin{equation}
\mathcal{L}_{\text{phase}}
=
\mathbb{E}_{\mathbf{x}\in \Omega \setminus \mathcal{N}_\delta}
\left[
\big(1-|P(\mathbf{x})|\big)^2
\right].
\label{eq:phase_sat}
\end{equation}

\paragraph{Far-field repulsion.}
To suppress spurious zero-level sets away from the observed surface, we impose a repulsion constraint on both the metric and composite fields in regions far from the interface.
Specifically, we penalize \emph{small} field magnitudes using an exponential barrier:
\begin{equation}
\mathcal{L}_{\text{far}}
=
\mathbb{E}_{\mathbf{x}\in \Omega \setminus \mathcal{N}_\delta}
\left[
\exp\left(
-\alpha 
\left(
r(\mathbf{x}) + |\phi(\mathbf{x})|
\right)
\right)
\right],
\label{eq:far_field}
\end{equation}
where $\Omega$ denotes the sampling domain and $\mathcal{N}_\delta$ is a near-surface band.
This loss is large only when both $r(\mathbf{x})$ and $|\phi(\mathbf{x})|$ are close to zero in free space, precisely the situation that would indicate an undesired interface or a spurious sheet far from the data.
As $r(\mathbf{x})$ grows and/or $|\phi(\mathbf{x})|$ moves away from zero, the exponential term decays rapidly, so the penalty becomes negligible and does not constrain the field unnecessarily.
The parameter $\alpha$ controls the sharpness of the barrier: larger values concentrate the penalty more strongly on near-zero magnitudes.
In our experiments, we set $\alpha=100$ to make the repulsion effective only when $r(\mathbf{x})+|\phi(\mathbf{x})|$ is very small, which efficiently discourages far-field zero-crossings while keeping the loss inactive elsewhere.

\begin{figure*}[t]
    \centering
    \includegraphics[width=0.97\linewidth]{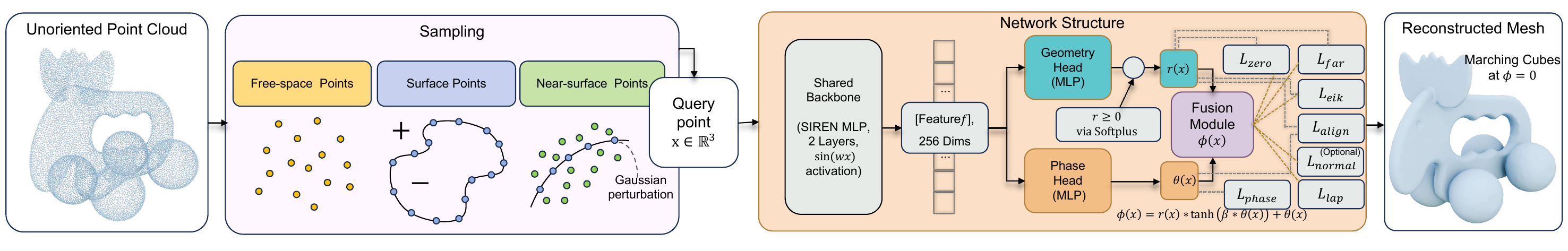}
    \caption{Pipeline. Query points sampled from an unoriented point cloud are processed by a SIREN-based network to predict $r(\mathbf{x})$ and $\theta(\mathbf{x})$, which are composed into the MPF $\phi(\mathbf{x})$.
The reconstructed mesh is directly extracted using Marching Cubes.}
    \label{fig:architecture}
\end{figure*}
\paragraph{(Optional) Normal alignment.}
For dense and low-noise point clouds, one can estimate reliable \emph{local} (unoriented) normals via PCA, although their global orientation may be inconsistent.
When such local normals are available, we optionally impose a normal alignment loss to encourage consistency between the learned implicit field and the input geometry.
Specifically, for each input sample $\mathbf{p}\in\mathcal{X}$ with an estimated unit normal $\mathbf{n}(\mathbf{p})$, we encourage the gradient direction of the composite field to align with $\mathbf{n}(\mathbf{p})$.
Since PCA normals are unoriented, we use an orientation-invariant objective:
\begin{equation}
\mathcal{L}_{\text{normal}}
=
\mathbb{E}_{\mathbf{p}\in \mathcal{X}}
\left[
1 -
\left|
\frac{\nabla \phi(\mathbf{p})}{\|\nabla \phi(\mathbf{p})\|_2}\cdot
\mathbf{n}(\mathbf{p})
\right|
\right].
\label{eq:normal_align}
\end{equation}
This term can improve geometric fidelity when normal estimates are reliable, similar in spirit to the normal/gradient supervision used in DEUDF~\cite{Xu2024DEUDF}.

\subsection{Network Architecture}
\label{subsec:architecture}

Our network is a straightforward design of the two continuous fields over $\mathbb{R}^3$: an unsigned metric field $r(\mathbf{x})$ and a phase field $\theta(\mathbf{x})$.
Given a query location $\mathbf{x}\in\mathbb{R}^3$, we first embed it with a shared MLP backbone
$f_{\psi}:\mathbb{R}^3\rightarrow \mathbb{R}^{256}$, and then predict $r(\mathbf{x})$ and $\theta(\mathbf{x})$ using two lightweight heads
(Figure~\ref{fig:architecture}). We use a fully-connected backbone with sinusoidal activations (SIREN-style) to capture high-frequency geometric details.
Concretely, the backbone consists of two linear layers
$3\rightarrow 256 \rightarrow 256$, each followed by a sine nonlinearity, producing a feature vector
$\mathbf{f}(\mathbf{x})=f_{\psi}(\mathbf{x})\in\mathbb{R}^{256}$.

The metric (geometry) head $g_r$ maps $\mathbf{f}(\mathbf{x})$ to a scalar metric prediction.
We use a small MLP $256\rightarrow 256 \rightarrow 1$ with a sine nonlinearity, followed by a nonnegative activation:
\begin{equation}
r(\mathbf{x}) = \mathrm{softplus}\left(g_r(\mathbf{f}(\mathbf{x}))\right) \ge 0.
\end{equation}
This guarantees nonnegativity of the metric branch by construction. In our implementation we use a sharp softplus (e.g., slope $=100$) so that $r(\mathbf{x})$ can approach zero on surface samples while remaining smooth.

The phase head $g_\theta$ has the same lightweight structure $256 \rightarrow 256 \rightarrow 1$ (with a sine nonlinearity), but outputs an unbounded scalar:
\begin{equation}
\theta(\mathbf{x}) = g_\theta(\mathbf{f}(\mathbf{x})) \in \mathbb{R}.
\end{equation}
When local topology is expected to be consistent (e.g., for closed/watertight shapes), we optionally initialize the final bias of the phase head to a small negative value to encourage a stable initial phase separation; otherwise we use standard initialization. 

From $\theta(\mathbf{x})$, we derive a bounded phase indicator $P(\mathbf{x})=\tanh\left(\beta\theta(\mathbf{x})\right)$, where the learnable scalar $\beta>0$ controls the sharpness of the phase transition. Finally, we form the signed composite field using our gated-metric formulation with residual phase injection $\phi(\mathbf{x}) = r(\mathbf{x})P(\mathbf{x}) + \theta(\mathbf{x})$ and extract the surface as the zero level set $\{\mathbf{x}:\phi(\mathbf{x})=0\}$.

We optimize the backbone parameters $\psi$, head parameters, and the scalar $\beta$ by minimizing a weighted combination of the losses introduced in Section~\ref{subsec:constraints}:
\begin{equation}
\begin{aligned}
\min & \ 
\mathcal{L}_{\text{zero}}
+\lambda_{\text{align}}\mathcal{L}_{\text{align}}
+\lambda_{\text{eik-r}}\mathcal{L}_{\text{eik}}^{r}
+\lambda_{\text{eik-}\phi}\mathcal{L}_{\text{eik}}^{\phi}+
\lambda_{\text{lap}}\mathcal{L}_{\text{lap}}\\
+&\lambda_{\text{far}}\mathcal{L}_{\text{far}}+\lambda_{\text{phase}}\mathcal{L}_{\text{phase}}
+\lambda_{\text{normal}}\mathcal{L}_{\text{normal}},
\end{aligned}
\end{equation}
where $\lambda$s balance the contributions of each term. The normal alignment loss $\mathcal{L}_{\text{normal}}$ is optional and is enabled only when reliable local normal estimates are available.

\section{Relation to SDFs and UDFs}
\label{sec:relationtosdfandudf}

For a smooth, watertight, two-sided surface $\mathcal{S}$, a (local) signed distance function $s(\mathbf{x})$ satisfies
$s(\mathbf{x})=0$ on $\mathcal{S}$ and $\|\nabla s(\mathbf{x})\|_2=1$ wherever it is differentiable in a tubular neighborhood of $\mathcal{S}$
(excluding singularities where the closest point is not unique, e.g., on the medial axis).
In contrast, an unsigned distance field $u(\mathbf{x})=\mathrm{dist}(\mathbf{x},\mathcal{S})$ is nonnegative and does not encode inside--outside sign.
Near a smooth surface, $u$ behaves like an absolute value function along the normal direction, and is therefore generally \emph{non-differentiable} on the interface $u=0$.
For surfaces with boundary, small-offset level sets $\{u=\varepsilon\}$ tend to form a two-sided shell that wraps around boundary edges, often described as a \emph{double-cover} effect.

MPFs are closely related to both signed and unsigned distance representations, but differ in how they allocate responsibilities between \emph{metric} and \emph{sign/phase}.
Like an SDF, MPFs provide a signed implicit function suitable for iso-surface extraction, via the composite field $\phi$ (with phase information carried by $\theta$).
Unlike SDF regression, where a single scalar must simultaneously fit distances and maintain global sign consistency, MPFs decouple these objectives:
the metric branch $r$ is trained as a UDF-like proximity field, while the phase branch $\theta$ models the sign transition.

Like a UDF, the metric field $r$ remains meaningful for surfaces with boundary, thin sheets, and certain non-manifold regions where a globally consistent sign is ambiguous.
Correspondingly, MPFs may inherit the UDF tendency to form a narrow two-sided shell around such structures.
At the same time, MPFs mitigate two practical drawbacks of pure UDFs:
(i) we enforce the Eikonal constraint for $r$ only on near-surface \emph{off-interface} samples (e.g., $0<r<\delta$), avoiding the cusp at $r=0$ and singular sets such as the medial axis; and
(ii) the residual phase injection yields stable, well-defined \emph{interface normals} at constrained samples
(via $\nabla\phi=\nabla\theta$ when $r=0$ and $\phi=0$), enabling SDF-style normal estimation and on-sample unit-gradient constraints.

A key practical difference is iso-surface extraction.
UDFs are nonnegative and therefore do not exhibit a sign change across the target surface; consequently, standard MC, which relies on sign changes across grid edges, is not directly applicable to the zero set $\{u=0\}$ in a robust way.
In practice, UDF-based reconstruction often relies on additional extraction procedures, such as dedicated optimization~\cite{hou2023dcudf} or auxiliary gradient predictions~\cite{ren2023geoudf}.
In contrast, MPFs produce a signed composite field $\phi$, so the target surface can be extracted directly as $\{\phi=0\}$ using standard MC.
For thin plates and sign-ambiguous configurations (e.g., boundary edges and non-manifold junctions), MPFs often form a narrow double-cover shell:
$\phi$ takes opposite signs on the two sides of the sheet, separated by a small gap, and therefore crosses zero between them.
This built-in sign transition makes the zero crossing well-conditioned for MC, while retaining UDF-like behavior around open structures.
Moreover, the residual phase injection provides a direct gradient pathway for the phase branch, yielding more stable near-interface optimization than the cusp-like behavior of a true UDF.

Overall, MPFs can be viewed as combining the \emph{metric robustness} of UDFs with the \emph{signed interface behavior} of SDFs, while avoiding the ill-conditioning that arises when both must be learned by a single field.
Table~\ref{tab:compare_sdf_udf} summarizes the key differences. 
Figure~\ref{fig:toymodeltoillustratedifferences} illustrates the complementary failure modes of SDFs and UDFs on a shape that mixes a watertight component with a thin, sign-ambiguous sheet.
The SDF branch cleanly separates the cube interior/exterior but tends to distort the sheet region, while the UDF branch represents the sheet naturally but lacks sign information even on the cube.
MPF combines the two behaviors by keeping a signed field on the watertight part and falling back to a UDF-like shell around the thin sheet, so that standard MC can still extract the surface directly from $\{\phi=0\}$.
Figure~\ref{fig:fields_vis} further visualizes the reconstructed surfaces together with the intermediate fields underlying this transition behavior.

\begin{table*}[!htbp]
\centering
\scriptsize
\setlength{\tabcolsep}{3pt}
\renewcommand{\arraystretch}{1.15}
\caption{Comparison of implicit representations. MPFs decouple metric proximity ($r$) from phase/sign ($\theta$) and combine them via the composite field $\phi$.}
\begin{tabularx}{\textwidth}{
    >{\raggedright\arraybackslash}p{3.50cm}
    >{\raggedright\arraybackslash}X
    >{\raggedright\arraybackslash}X
    >{\raggedright\arraybackslash}X}
\toprule
 & \textbf{SDF} & \textbf{UDF} & \textbf{Ours (metric--phase)} \\
\midrule
Sign/inside--outside
& \checkmark
& $\times$
& \checkmark\ (via $\theta$ / $\phi$) \\

Metric proximity
& \checkmark
& \checkmark
& \checkmark\ (via $r$) \\

Eikonal constraint 
& near-surface band $-\delta<s<\delta$; avoid medial axis
& near-surface band $0<u<\delta$; avoid medial axis
& $r$: near-surface band $0<r<\delta$; avoid medial axis; $\phi$ (or $\theta$): on-sample interface \\

Differentiable on the interface
& yes 
& no at $u=0$ (cusp)
& yes at constrained samples (via $\theta$) \\

Surfaces with boundary/thin sheets
& sign ill-posed; spurious closures
& double-cover shell
& UDF-like shell + phase separation \\

Surface extraction
& standard MC/DC on $\{s=0\}$
& requires optimization or gradients of $u$
& standard MC on $\{\phi=0\}$ \\
\bottomrule
\end{tabularx}
\label{tab:compare_sdf_udf}
\end{table*}

\begin{figure*}[t]
    \centering
    \includegraphics[width=1\textwidth]{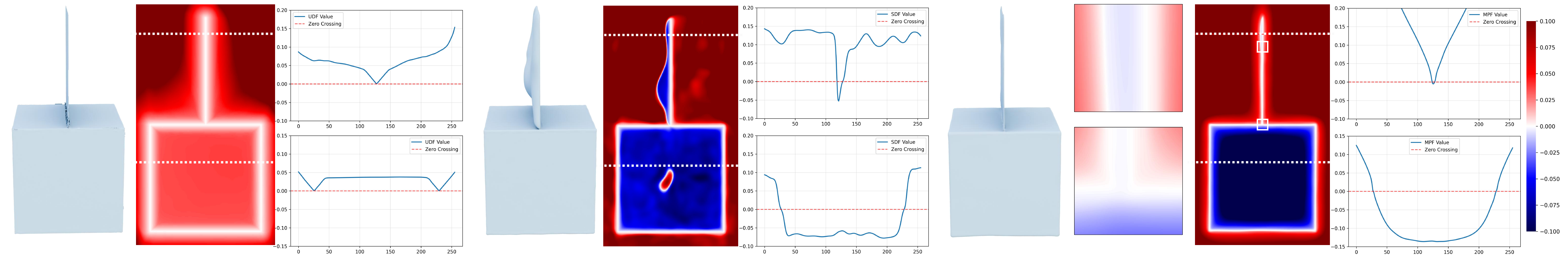}\\
    \makebox[0.26\linewidth][c]{UDF}
    \makebox[0.26\linewidth][c]{SDF}
    \makebox[0.35\linewidth][c]{MPF}
    \caption{
    Toy example: cube + attached thin sheet (non-manifold + boundary).
    The shape is watertight on the cube, but includes a thin sheet attached to the top face, forming a non-manifold junction at the attachment and a boundary edge at the free end of the sheet.
    From left to right we show a UDF learned by CAP-UDF~\cite{zhou2024cap}, an SDF learned by NSH~\cite{wang2023neural}, and our MPF composite field $\phi$.
    Each panel shows the scalar field on a 2D slice; white indicates values close to zero, while the colormap encodes sign and magnitude (warm: positive; cool: negative; note that UDF values are nonnegative).
    The dashed horizontal lines indicate two 1D probes (one crossing the thin sheet, one crossing the cube); the plots show the corresponding field values along these probes, and the red dashed line marks zero.
    UDF captures the thin sheet but cannot provide an inside--outside sign for the cube.
    SDF produces a clean sign change on the watertight cube but struggles near the sheet/boundary where a globally consistent sign is ill-posed.
    MPF preserves signed behavior on the cube while exhibiting a UDF-like narrow shell around the sheet, enabling direct extraction of $\{\phi=0\}$ via standard Marching Cubes. Close-up views further highlight how MPF performs robustly on both watertight regions and thin-plate structures.
    }
    \label{fig:toymodeltoillustratedifferences}
\end{figure*}

\section{Experiments}
\label{sec:experiments}

\paragraph{Implementation details.}
All experiments are conducted on a single NVIDIA GeForce RTX 4090 GPU with 24 GB of memory.
Input coordinates are normalized to the range $[-1,1]^3$. We construct  near-surface tubular band $\mathcal{N}_{\delta}$ by jittering the input samples. Since the true surface $\mathcal{S}$ is unknown, we approximate the unsigned distance to the surface by the nearest-neighbor distance to the input point set,
$d_{\mathcal{X}}(\mathbf{x})=\min_{\mathbf{p}_i\in\mathcal{X}}\|\mathbf{x}-\mathbf{p}_i\|_2$, computed using a KD-tree, and retain samples satisfying $0<d_{\mathcal{X}}(\mathbf{x})<\delta$ as near-surface points.
Far-field samples in $\Omega\setminus\mathcal{N}_{\delta}$ are drawn uniformly from the bounding box $\Omega$ of the input point cloud and rejected if
$d_{\mathcal{X}}(\mathbf{x})<\delta$.
Unless otherwise specified, the loss weights
$(\lambda_{\text{align}}, \lambda_{\text{eik-}r},\lambda_{\text{eik-}\phi}, \lambda_{\text{lap}}, \lambda_{\text{phase}}, \lambda_{\text{far}})$
are set to $(0.7, 0.007, 0.007, 0.0004, 0.002, 0.1)$. When reliable local normals can be estimated easily, e.g., via PCA, we set $\lambda_{\text{normal}}=0.15$; otherwise, we set it to $0$. In our implementation, $\beta$
 is initialized to $50$ and optimized with a learning rate of $1e-4$. As described in the appendix, this setting provides a good practical balance between convergence speed and numerical stability. To stabilize early training, we activate $\mathcal{L}_{\text{align}}$ only after $3{,}000$ iterations.

\paragraph{Test models.}
Since MPFs are designed to combine the strengths of signed and unsigned representations, we evaluate on models that pose complementary challenges to both classes of methods. Specifically, we select $30$ models from  Thingi10K~\cite{zhou2016thingi10k} and ShapeNet~\cite{chang2015shapenet}, together with $5$ artist-designed models from Sketchfab. These models contain thin shells, thin plates, open boundaries, and closely spaced surface layers. Such structures are challenging for SDF-based methods because the inside--outside labeling can be ambiguous, ill-posed, or numerically unstable near thin regions. The test set also includes fine geometric details and closely spaced features, which are difficult for UDF-based learning due to the non-differentiability of unsigned distance fields at the zero set and the resulting optimization instability. Together, these examples form a balanced benchmark for evaluating both geometric fidelity and robustness.


\paragraph{Baselines.} We compare our method against two groups of state-of-the-art neural implicit reconstruction methods. The first group consists of SDF-based methods, including  DiGS~\cite{ben2022digs}, NSH~\cite{wang2023neural}, StEik~\cite{yang2023steik}, PG-SDF~\cite{Koneputugodage_2024_CVPR}, and I-filtering~\cite{li2025filtering}.
These methods learn signed fields and are primarily designed for watertight, two-sided manifold surfaces, where a globally consistent inside--outside labeling is well defined. The second group consists of UDF-based methods, including CAP-UDF~\cite{zhou2024cap}, GeoUDF~\cite{ren2023geoudf}, DEUDF~\cite{Xu2024DEUDF} and S$^2$DF~\cite{yang2025monge}.
These methods learn unsigned distance-like fields and are more tolerant of surfaces with boundaries, thin sheets, and other sign-ambiguous configurations, although they often require additional machinery for robust surface extraction.

\paragraph{Comparison with baselines.} The ShapeNet subset contains relatively simple thin structures with smooth geometry, whereas the artist-designed and Thingi10K models are  more challenging due to frequent non-manifold configurations, sharp features, and pronounced multi-scale variation, such as fine details coexisting with large smooth regions. SDF-based baselines often introduce incorrect closures on thin plates or erroneously merge nearby layers. In contrast, our method consistently produces stable reconstructions and correctly handles non-manifold configurations, thin plates, thin shells, and boundary structures. These advantages are reflected in both the quantitative and qualitative results; see Table~\ref{tab:shin-shell} and Figures~\ref{fig:sdf-compare},~\ref{fig:flower}, and~\ref{fig:real-scan}). 

UDF-based methods are better suited for modeling non-manifold configurations, boundary structures, and thin plates. However, their gradients can vanish near the zero level set or become unstable, as reported in DEUDF~\cite{Xu2024DEUDF}. Since gradient information plays an important role during learning and is commonly used to guide iso-surface extraction in the absence of sign information~\cite{zhou2024cap,ren2023geoudf}, such instability often leads to over-smoothed geometry or unreliable surface extraction; see Figure~\ref{fig:udf-compare}. In contrast, MPFs maintain stable gradients at the zero level set and support direct iso-surfacing with standard Marching Cubes. Consequently, our method achieves higher geometric fidelity than UDF-based baselines while remaining computationally efficient, without relying on the time-consuming optimization-based surface extraction used by DEUDF~\cite{Xu2024DEUDF}; see Figure~\ref{fig:udf-compare-detail}.

\paragraph{Evaluation on noisy and real-world inputs.}
We further evaluate robustness under four challenging settings: indoor scenes, outdoor scenes, real-scanned objects, and synthetic noise perturbations. Specifically, we use ScanNet indoor scenes~\cite{dai2017scannet}, outdoor scenes from the 3D Scene dataset~\cite{10.1145/2461912.2461919}, real-scanned SRB objects~\cite{williams2019deep}, and point clouds corrupted with synthetic Gaussian noise. For ScanNet, we evaluate both noisy point clouds reconstructed from RGB-D videos and clean points sampled from ground-truth meshes. As shown in Figure~\ref{fig:indoorscene}, SDF-based methods tend to produce bulging artifacts under noisy inputs, while UDF-based methods often suffer from fragmented surfaces. In comparison, our method consistently produces high-quality reconstructions in both settings. Figure~\ref{fig:scene_level} further shows scene-level results, demonstrating robustness in complex indoor environments. On the SRB dataset, which contains real-scanned noise and missing regions, our method reconstructs accurate surfaces despite incomplete observations, as shown in Figure~\ref{fig:realscan+noise}. We also test synthetic Gaussian noise perturbations at 0.5\%, where MPFs remain stable and preserve reasonable results.

\paragraph{Controlled thin-structure analysis.} To evaluate robustness under varying geometric scales, we conduct controlled thin-structure experiments with different thickness levels $T$. Quantitative results are reported in Table~\ref{tab:thin_thickness}, and the corresponding visual comparisons are shown in Figure~\ref{fig:thin_thickness_vis}. As the structures become progressively thinner, the SDF baseline, NSH, rapidly degrades and often fails to recover complete geometry, producing missing regions or severe artifacts. This behavior reflects the difficulty of maintaining a stable signed distance field near extremely thin surfaces. The UDF baseline, GeoUDF, alleviates the sign constraint but remains sensitive to local surface ambiguity, leading to topological inconsistencies such as surface splitting and holes at small thickness scales. In contrast, our method consistently preserves structural integrity across all thickness levels. Even in extremely thin regimes, MPFs maintain accurate geometry without collapse or fragmentation, demonstrating strong robustness to scale variation.

\begin{table}[htb]
\centering
\setlength\tabcolsep{3pt}
\caption{Quantitative results on (top) watertight models and (bottom) open surfaces; all models contain non-manifold structures and thin sheets. We report the Chamfer Distance, scaled by $10^3$, comparing our method against SDF-based baselines. For  real-world scan, the metric is computed as the average point-to-surface distance due to the absence of ground-truth geometry. }
\scalebox{0.7}{
\begin{tabular}{l|llllllll}
\hline
            & Lamp2  & Lattice   &  Arch  & V-Box  & Dolphin & Artichoke & Ship  & Spikeball \\
Method      & (100k) & (100k) & (100k) & (100k) & (10k)   & (70k)     & (10k) & (140k)    \\ \hline
NSH         & 1.162  & 4.046  & 2.149  & 1.224  & 0.454   & 5.287     & 5.591 & 1.329     \\
DiGS        & 1.491  & 3.646  & 2.021  & 1.190  & 0.878   & 7.963     & 7.945 & 6.074     \\
StEik       & 1.198  & 2.250  & 0.591  & 1.047  & 0.515   & 5.442     & 2.596 & 7.506     \\
PG-SDF      & 1.766  & 6.631  & 0.940  & 1.101  & 0.436   & 1.335     & 2.426 & 1.480     \\
I-filtering & 1.160  & 21.354 & 2.912  & 1.551  & 0.776   & 3.881     & 4.977 & 2.120     \\ \hline
Ours        & 1.106  & 2.012  & 0.506  & 0.696  & 0.401   & 1.127     & 0.672 & 0.982     \\ \hline
\end{tabular}
}
\scalebox{0.7}{
\begin{tabular}{l|llllllll}
\hline
            & Flower & Lamp1  & Dress & Cloth & Vase   & Hat   & Insertion & Real-scan \\
Method      & (10k)  & (70k)  & (10k)  & (10k)& (10k)  & (10k) & (10k)           & (3M)      \\ \hline
NSH         & 10.425 & 19.387 & 9.502  & 37.164 & 7.732  & 7.047 & 9.459           & 1.435     \\
DiGS        & 6.237  & 8.851  & 7.344  &4.663 & 9.535  & 10.09 & 12.834          & 0.726     \\
StEik       & 7.729  & 24.334 & 7.777  &5.470 & 13.263 & 8.028 & 20.823          & 0.695     \\
PG-SDF      & 9.036  & 3.530  & 15.481 &10.915 & 17.25  & 26.58 & 18.198          & 10.118    \\
I-filtering & 45.648 & 2.457  & 4.976  & 3.698& 2.975  & failed    & 17.721          & 19.339        \\ \hline
Ours        & 1.757  & 1.165  & 2.113  &2.194 & 2.625  & 1.534 & 1.807           & 0.678     \\ \hline
\end{tabular}}

\label{tab:shin-shell}
\end{table}

\begin{table}[t]
\centering
\small
\setlength{\tabcolsep}{4pt}
\caption{
Quantitative comparison of SDF-based (NSH), UDF-based (GeoUDF), and our MPF method on thin structures under different thickness scales $T$. 
SDF-based methods fail or produce severe artifacts at extremely thin scales, UDF-based methods suffer from topological issues such as splitting or holes, while MPF consistently achieves accurate reconstruction.
}
\begin{tabular}{c|c|c|c}
\hline
Thickness ($T$) & SDF (NSH) & UDF (GeoUDF) & Ours (MPF) \\
\hline
0.10  & 0.40 (artifact) & 0.42 (splitting) & 0.26 \\
0.05  & 0.26 & 0.43 (splitting) & 0.27 \\
0.01  & 3.22 (failure) & 0.42 (holes) & 0.20 \\
0.005 & 4.04 (failure) & 0.35 (holes) & 0.23 \\
\hline
\end{tabular}
\label{tab:thin_thickness}
\end{table}

\paragraph{Ablation studies.} We isolate the contribution of each core loss component by either removing the corresponding term or increasing its weight by $10\times$. The results are summarized in Table~\ref{tab:ablation}, with qualitative comparisons shown in Figure~\ref{fig:ablation_vis}. Removing key loss terms substantially degrades reconstruction quality, especially for thin structures and open-boundary shapes. In contrast, increasing individual weights leads to only minor performance variations, indicating that the proposed objective is robust to moderate hyperparameter changes and that the loss terms are well balanced.

\begin{table}[t]
\centering
\small
\caption{Ablation study on MPF loss components. We either remove a term or increase its weight by $10\times$. Each component contributes to stable optimization and improved reconstruction quality, especially for thin structures.}
\setlength{\tabcolsep}{3.5pt}
\begin{tabular}{cccccc}
\hline
$\lambda_{\text{align}}$ & $\lambda_{\text{normal}}$ & $\lambda_{\text{lap}}$ & $\lambda_{\text{phase}}$ & Description & CD $\downarrow$ \\
\hline
1$\times$ & 1$\times$ & 1$\times$ & 1$\times$ & Default & 1.99 \\
10$\times$ & 1$\times$ & 1$\times$ & 1$\times$ & Strong alignment & 2.31 \\
0 & 1$\times$ & 1$\times$ & 1$\times$ & No alignment & 6.29 \\
1$\times$ & 10$\times$ & 1$\times$ & 1$\times$ & Strong normal constraint & 2.57 \\
1$\times$ & 0 & 1$\times$ & 1$\times$ & No normal constraint & 2.33 \\
1$\times$ & 1$\times$ & 10$\times$ & 1$\times$ & Strong Laplacian constraint & 2.17 \\
1$\times$ & 1$\times$ & 0 & 1$\times$ & No Laplacian constraint & 4.19 \\
1$\times$ & 1$\times$ & 1$\times$ & 10$\times$ & Strong phase constraint & 2.08 \\
1$\times$ & 1$\times$ & 1$\times$ & 0 & No phase constraint & 10.68 \\
\hline
\end{tabular}
\label{tab:ablation}
\end{table}

\section{Conclusion}
\label{sec:conclusion}
We introduced Metric--Phase Fields, a new type of implicit function for reconstructing surfaces with thin structures, surfaces with boundaries, and non-manifold configurations from unoriented point clouds. MPFs separate metric proximity and sign/phase information by learning an unsigned metric field $r$ alongside a smooth phase field $\theta$, and combining them through a gated-metric formulation with residual phase injection to create a signed composite field that supports stable optimization and direct iso-surface extraction. This design maintains SDF-like signed behavior where an inside--outside notion is meaningful while smoothly showing UDF-like behavior in sign-ambiguous regions such as thin plates and boundary edges. MPFs enable direct zero level set extraction via the standard Marching Cubes algorithm. Across synthetic and scanned benchmarks, MPFs more faithfully preserve thin and layered geometry than recent SDF-based baselines, and provide more reliable training and extraction than UDF-based alternatives.

\clearpage
\newpage
\section*{Acknowledgment} This research is supported by cash and in-kind funding from NTU S-Lab and industry partner(s). It is also partially supported by the Ministry of Education, Singapore, under its Academic Research Fund Grant RT19/22, the Research Projects of ISCAS (ISCAS-JCMS-202303, ISCAS-ZD-202401, ISCAS-JCZD-202402 and ISCAS-JCMS-202403), and in part by the Hong Kong Research Grants Council under Grants 11219324 and N\_CityU1114/25.

\section*{Impact Statement}
This paper presents work whose goal is to advance the field of Machine
Learning. There are many potential societal consequences of our work, none
which we feel must be specifically highlighted here.
\bibliography{reference}
\bibliographystyle{icml2026}

\newpage
\appendix
\onecolumn

\section{Extended Analysis}
\label{sec:ablation}
\paragraph{Effect of Scaling Factor $\beta$.}

The parameter $\beta$ controls the sharpness of the phase transition by scaling the magnitude of the phase field $\theta$ near the surface, thereby influencing both the strength of gradient signals and the convergence behavior during training. We evaluate its effect on thin-walled structures using $\beta \in {1, 50, 100, 500}$. For each setting, we visualize the reconstructed meshes at different training stages (1k, 4k, and 10k iterations).

As shown in Fig.~\ref{fig:ablation_beta}, a small value of $\beta$ ($\beta=1$) leads to slow convergence, and the model has not fully converged even after 10k iterations. While the model can already recover a reasonable surface at early iterations due to the large learning amplitude of $\theta$, subsequent refinement progresses gradually. A moderate value of $\beta=50$ achieves the best trade-off, enabling fast and stable convergence and producing clean, topologically consistent thin-plate geometries. When $\beta$ is increased to 100, the optimization becomes less stable at early iterations but converges more rapidly in later stages. In contrast, an excessively large value ($\beta=500$) results in unstable training and fails to produce meaningful reconstructions.
\begin{figure}[htb]
    \centering
    \includegraphics[width=1.0\textwidth]{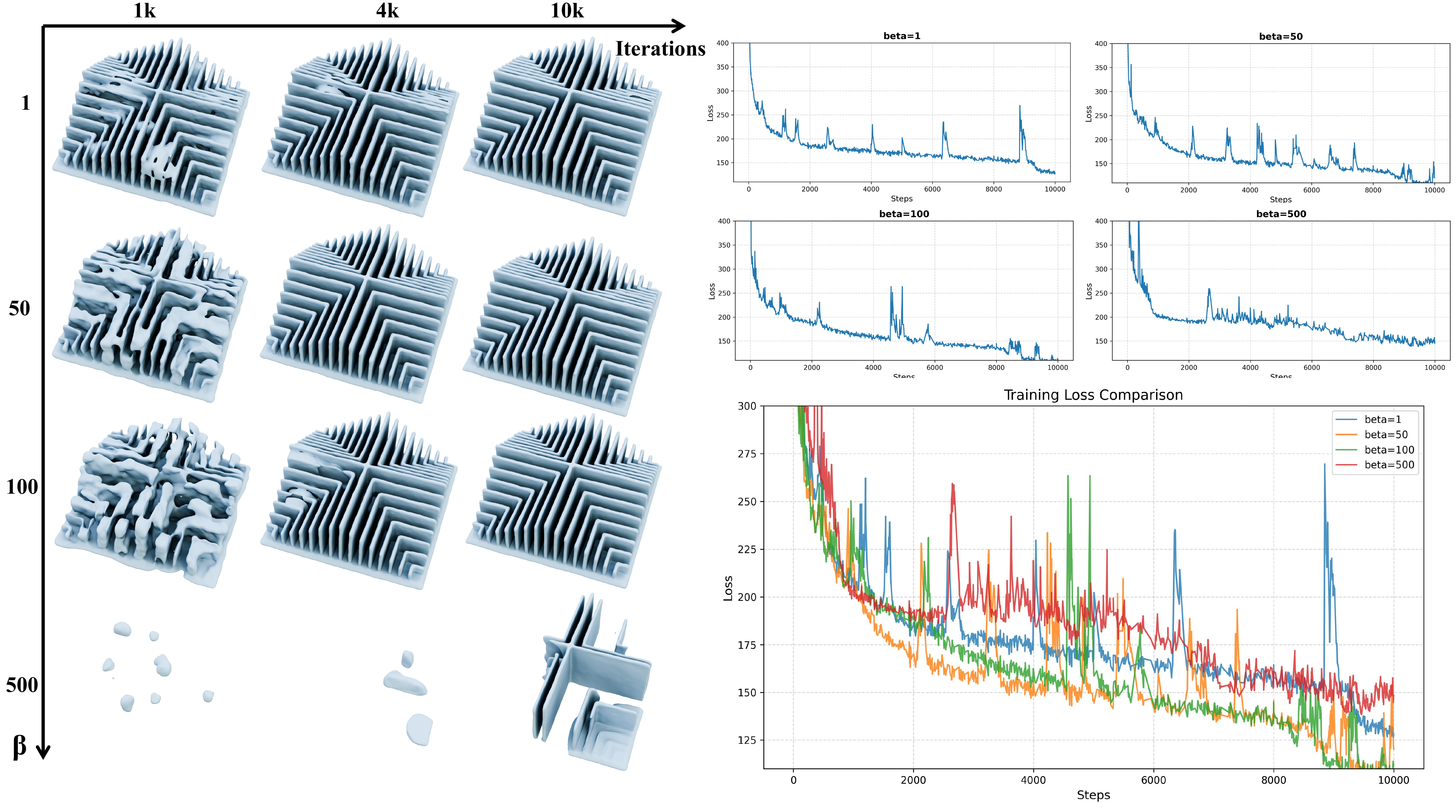}
    \caption{Ablation study of the phase scaling parameter $\beta$ on thin-walled structures. Results are shown for $\beta \in \{1, 50, 100, 500\}$ at 1k, 4k, and 10k training iterations.}
    \label{fig:ablation_beta}
\end{figure}
\paragraph{Effect of directly parameterizing $\phi(x) = P(x)$.}
We investigate a simplified formulation that directly learns the implicit field as
$\phi=\tanh(\beta\,\theta(\mathbf{x}))$, without explicitly modeling a distance component.
Although this formulation can partially recover thin-shell structures under carefully chosen $\beta$,
we observe several fundamental limitations.

Due to the saturation property of the hyperbolic tangent,
the gradient of $\phi$ is highly localized near the zero level set of $\theta$.
Specifically, $\nabla \phi = \beta \bigl(1-\tanh^2(\beta\theta)\bigr)\nabla\theta$ vanishes rapidly
away from the surface.
As a result, gradient-based constraints such as the Eikonal loss
only affect an infinitesimal neighborhood around the surface,
preventing sign information from propagating into the surrounding space.
This often leads to degenerate solutions where the implicit field collapses to a single sign
away from the surface, effectively enclosing the shape.

We further find that the choice of $\beta$ is highly sensitive.
A small $\beta$ enlarges the effective gradient region and can alleviate sign collapse
for thin-shell structures, but tends to introduce high-frequency oscillations
and unstable optimization.
In contrast, a large $\beta$ overly localizes the gradient near the surface, making sign flipping difficult to learn and resulting in unstable or trivial solutions, as shown in Figure~\ref{fig:placeholder}.

More critically, for thin-plate geometries with open boundaries, this parameterization fundamentally fails to represent valid implicit surfaces.
Since $\phi=\tanh(\beta\theta)$ enforces continuous sign transitions everywhere,
it cannot correctly terminate a single surface at boundary regions,
inevitably producing spurious thin layers near the boundaries.
These observations highlight the necessity of explicitly decoupling
distance and sign modeling, which enables stable gradient propagation
and flexible representation of open and non-manifold geometries.
\begin{figure}
    \centering
    \includegraphics[width=0.95\linewidth]{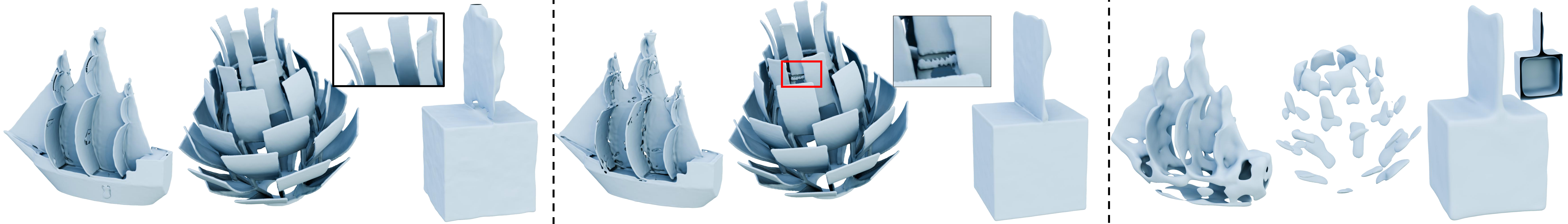}
    \makebox[0.3\linewidth][c]{$\beta$=0.01}
     \makebox[0.3\linewidth][c]{$\beta$=1}  
     \makebox[0.3\linewidth][c]{$\beta$=50}
    \caption{Result obtained when the shape is represented solely by the phase term, i.e., $\phi(x) = P(x)$.}
    \label{fig:placeholder}
\end{figure}
\section{Discussions}
\label{sec:discussions}


We discuss key properties of MPFs and explain why they are useful for reconstructing surfaces with thin structures and boundaries.

\paragraph{Why decouple metric and topology?}
Learning an SDF directly from unoriented points requires the network to recover the boundary surface (metric) and provide a consistent inside--outside labeling (topology) at the same time. These objectives interact: an incorrect topological guess early in training can distort geometric convergence and lead to poor local minima. MPFs separate the learning into two complementary tasks: (i) Metric stability: $r(\mathbf{x})$ captures proximity without being affected by sign ambiguity; and (ii) Phase learning: $\theta(\mathbf{x})$ encodes a soft inside--outside preference when such a notion is meaningful. The gated-residual coupling in \eqref{eq:gated_residual} lets the two tasks support each other without collapsing into a single ill-conditioned objective.

\paragraph{If $\theta$ defines the interface, why does $r$ still matter?}
Under the mild structural condition $r(\mathbf{x})\ge 0$ (which can be enforced by design, e.g., a nonnegative activation),
Proposition~\ref{prop:sign_consistency} implies $\{\phi=0\}=\{\theta=0\}$. 
In this regime, $r$ does not ``move'' the interface; instead it provides stable metric supervision off the surface (via $\mathcal{L}_{\text{eik}}^{r}$) and shapes the behavior of $\phi$ away from the interface through the gated term $rP$.
This improves conditioning compared to using a purely phase-driven signed field.

\paragraph{Why does the residual stabilize training?}
In a purely multiplicative construction $\phi=rP$, the backpropagated gradient to the phase branch is scaled by $r(\mathbf{x})$:
\begin{equation}
\frac{\partial \phi}{\partial \theta}
=
r(\mathbf{x})\,\beta\,\sech^2\!\big(\beta\,\theta(\mathbf{x})\big),
\end{equation}
which collapses near the interface where $r(\mathbf{x})\approx 0$.
With the residual term, we obtain
\begin{equation}
\frac{\partial \phi}{\partial \theta}
=
1 + r(\mathbf{x})\,\beta\,\sech^2\!\big(\beta\,\theta(\mathbf{x})\big)
\approx 1 \quad \text{when } r(\mathbf{x})\approx 0,
\label{eq:dphi_dtheta}
\end{equation}
providing a direct and non-vanishing gradient pathway for learning the phase transition.

\paragraph{Why MPF improves thin-structure reconstruction.}
For standard watertight geometries (e.g., Bunny, Armadillo), SDF-based methods already achieve strong performance, and MPF yields comparable results. Our primary focus is on more challenging regimes involving thin structures and tightly coupled surface layers, where SDF-based methods often degrade, as shown in Table~\ref{tab:main_cd}.

This performance gap arises from the coupled nature of sign and distance in standard SDF learning. The network must simultaneously model surface proximity and consistent inside--outside labeling. In thin or tightly enclosed regions, this coupling makes it difficult to represent rapid sign transitions within a narrow spatial band, often leading to thickness inflation, surface merging, or spurious closures.

In contrast, MPF decouples metric and phase representations. The metric field captures unsigned proximity under standard geometric regularization, while the phase field independently models sign transitions. Since the phase component is not constrained by unit-gradient conditions, it can better adapt to rapid sign changes without interfering with distance learning. This results in a better-conditioned optimization process and improved reconstruction quality for thin and complex geometries.
\begin{table}[htb]
\centering
\small
\caption{
Quantitative comparison using Chamfer Distance (CD $\downarrow$). On standard watertight models such as Bunny and Armadillo, all methods perform comparably. On challenging thin-structure cases (e.g., Ship), MPF significantly outperforms SDF- and UDF-based baselines, demonstrating its effectiveness in handling thin and complex geometries.
}
\setlength{\tabcolsep}{4pt}
\begin{tabular}{l|cccccc}
\hline
Model & DiGS & StEik & PG-SDF & I-filtering & NSH & Ours \\
\hline
Bunny & 0.454 & 0.652 & 0.396 & 0.503 & 0.494 & 0.458 \\
Armadillo & 0.506 & 0.558 & 0.481 & 0.550 & 0.422 & 0.476 \\
Ship (thin structures) & 7.945 & 2.596 & 2.426 & 4.997 & 5.591 & 0.672 \\
\hline
\end{tabular}

\label{tab:main_cd}
\end{table}

\paragraph{Behavior on surfaces with boundary and non-manifold regions.}
In regions where a globally consistent inside--outside labeling is ambiguous or ill-posed (e.g., thin fins, surfaces with boundary, and certain non-manifold configurations),
the metric branch $r$ remains well-defined and behaves like a UDF, encouraging a stable geometric proxy.
Empirically, this often manifests as a UDF-like shell that wraps boundary edges (a ``double cover'' effect), while the phase field $\theta$ separates the two sides by taking distinct values across the wrapped proxy surface.

In downstream applications, watertight and stable surface representations are often preferred for compatibility with standard processing pipelines, while explicitly open surfaces may lead to fragmented geometry.

Overall, MPF represents such regions as implicit thin shells, providing a practical trade-off between geometric flexibility and downstream usability without requiring explicit boundary handling.
\paragraph{Surface extraction.}
For Marching Cubes extraction, we use a fixed high-resolution grid for all experiments, which is sufficient for capturing thin structures in most cases (e.g., plant in Fig.~\ref{fig:ambiguity}). For scenarios involving highly anisotropic or extremely thin geometries (e.g., thin plates in Fig.~\ref{fig:thin_thickness_vis}), the grid resolution is adjusted according to the smallest spatial extent of the object to ensure adequate sampling density.
Alternatively, adaptive octree-based Marching Cubes methods can further improve efficiency by refining cells only in regions where the implicit field exhibits high variation, making them a lightweight option for handling extremely fine structures.
 \begin{figure}
     \centering
     \includegraphics[width=\linewidth]{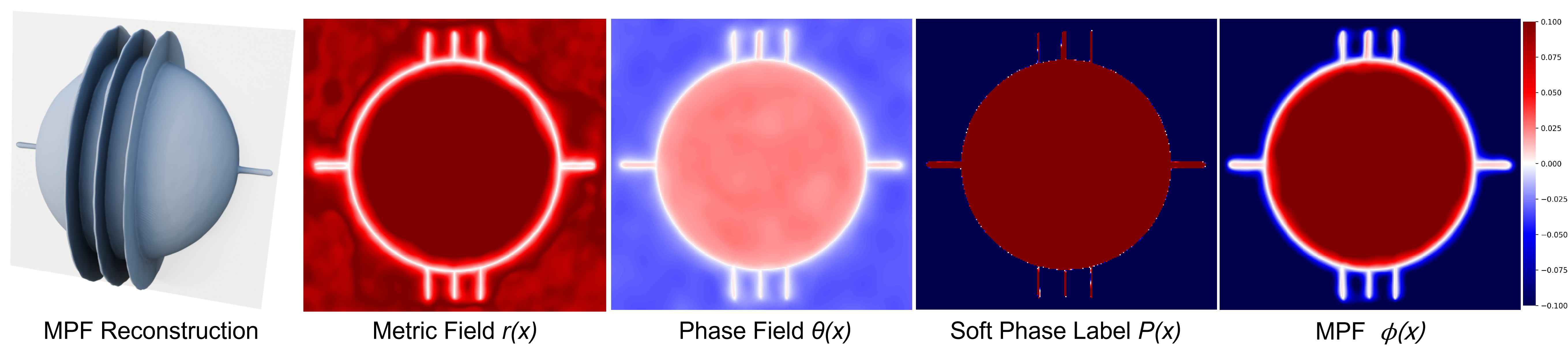}
     \caption{Visualization of the components of our MPF representation. Separately showing these fields highlights their complementary contributions to the final reconstruction.}
     \label{fig:fields_vis}
 \end{figure}

\begin{figure*}[htb]
    \centering
    \includegraphics[width=0.98\linewidth]{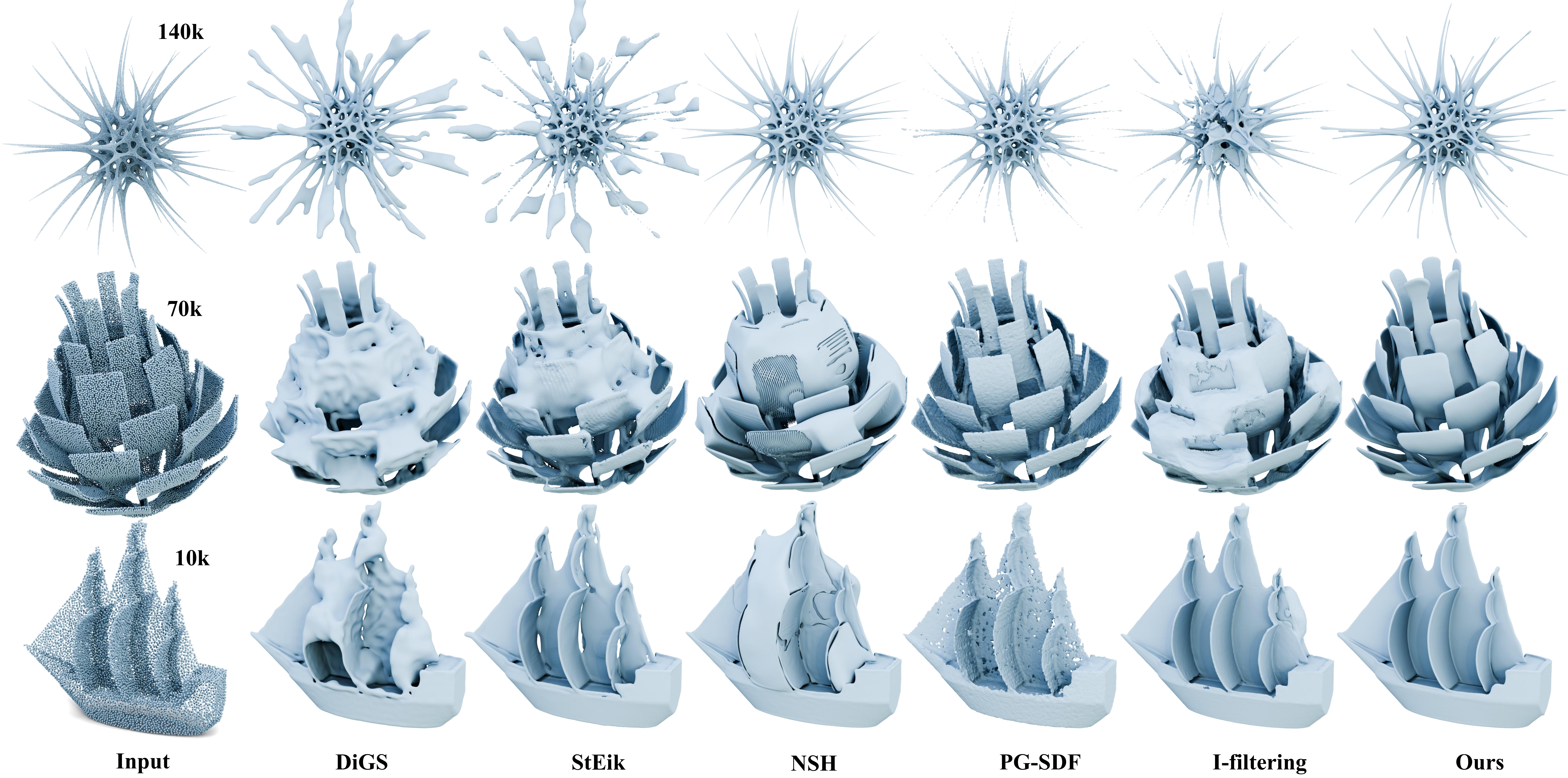}
    \caption{Qualitative comparison on watertight models (Spikeball, Artichoke, Ship) with complex thin and multi-layered structures under varying sampling levels.
    Compared to DiGS~\cite{ben2022digs}, StEik~\cite{yang2023steik}, NSH~\cite{wang2023neural}, PG-SDF~\cite{Koneputugodage_2024_CVPR}, and I-filtering~\cite{li2025filtering}, our approach produces significantly more robust reconstructions, especially in the presence of intricate thin structures and layered geometries.}
    
    \label{fig:sdf-compare}
\end{figure*}

\begin{figure}
    \centering
    \includegraphics[width=0.9\textwidth]{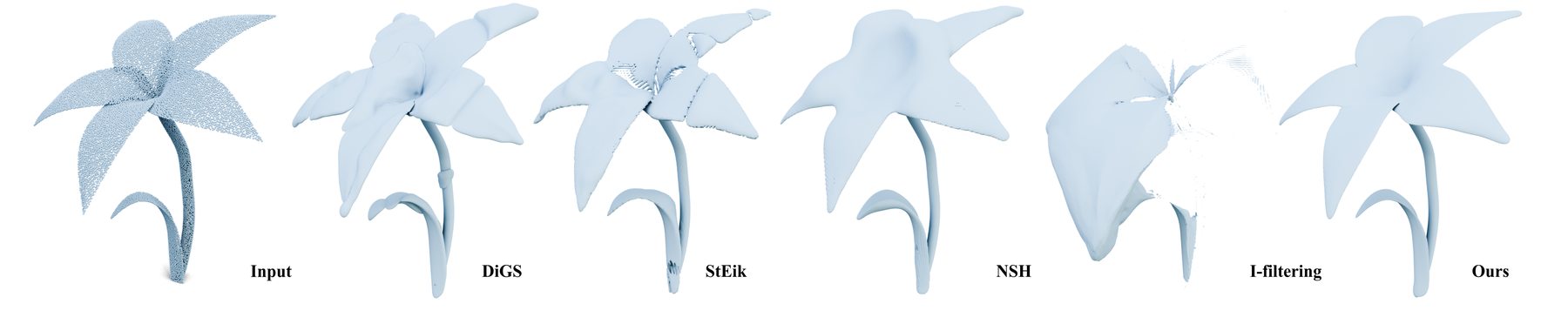}
    \caption{Qualitative comparison on inputs that consist of \emph{single-layer} point clouds.
Compared to DiGS~\cite{ben2022digs}, StEik~\cite{yang2023steik}, NSH~\cite{wang2023neural}, and I-filtering~\cite{li2025filtering}, our method produces thinner reconstructions that more closely adhere to the original input point clouds.}
    \label{fig:flower}
\end{figure}

\begin{figure}[htb]
    \centering
    \includegraphics[width=\textwidth]{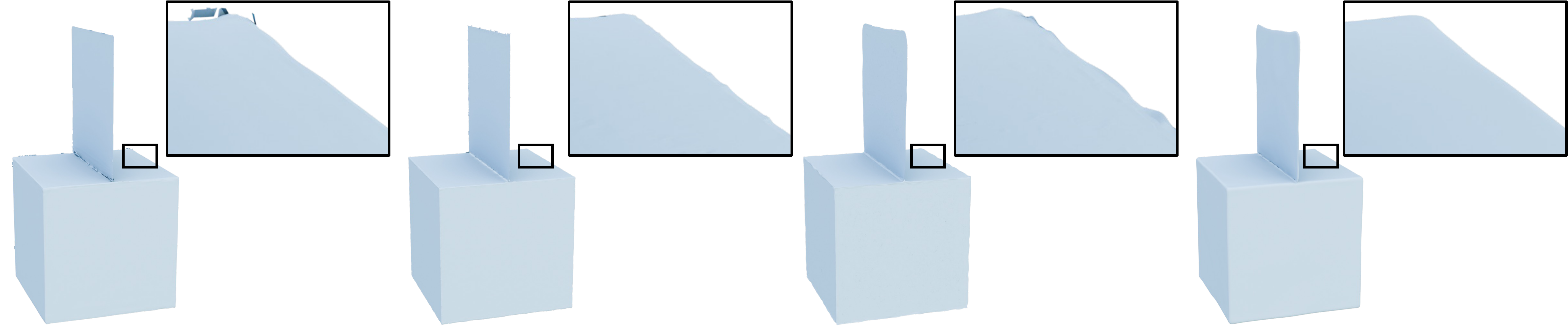}
     \makebox[0.23\linewidth][c]{CAP-UDF}
     \makebox[0.25\linewidth][c]{GeoUDF}
     \makebox[0.25\linewidth][c]{S$^2$DF}
     \makebox[0.23\linewidth][c]{Ours}
\caption{Qualitative comparison on a toy model containing boundary and non-manifold structures.
We use this toy model with simple geometry to evaluate iso-surface extraction behavior. All UDF-based baselines are able to recover the overall shape reasonably well. However, they either rely on gradient information or optimization-based procedures to extract the zero level set. Small inaccuracies in UDF values can lead to large errors in the estimated gradients (e.g., gradient direction flips), which in turn cause artifacts in the extracted surfaces. As a result, both CAP-UDF~\cite{zhou2024cap} and GeoUDF~\cite{ren2023geoudf} exhibit noticeable non-smoothness in their iso-surfaces. S$^2$DF~\cite{yang2025monge}, in contrast, employs DCUDF~\cite{hou2023dcudf}, an optimization-based surface extraction method, which produces better results than CAP-UDF and GeoUDF. Nevertheless, minor non-smooth artifacts remain. Moreover, its iso-surfacing stage is significantly more time-consuming than ours: at a resolution of ${256}^3$, S$^2$DF requires 96 seconds, whereas our method takes only 5 seconds.}
    
    \label{fig:udf-compare}
\end{figure}

\begin{figure}[htb]
    \centering
    \includegraphics[width=\textwidth]{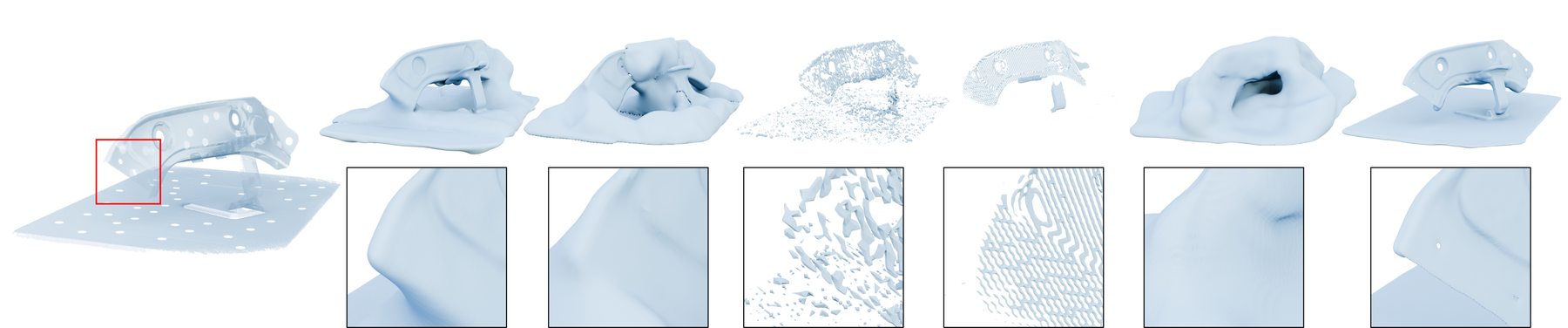}
    \makebox[0.17\linewidth][c]{Input}
     \makebox[0.13\linewidth][c]{DiGS}
     \makebox[0.13\linewidth][c]{StEik}
     \makebox[0.13\linewidth][c]{PG-SDF}
     \makebox[0.13\linewidth][c]{I-filtering}
     \makebox[0.13\linewidth][c]{NSH}
     \makebox[0.13\linewidth][c]{Ours}
    \caption{Comparisons on a real-scan thin-structure model. The real-scan model contains approximately 3 million input points with thin structures, which are challenging to reconstruction from real scans. Results are extracted at a resolution of $512^3$. }
    \label{fig:real-scan}
\end{figure}

\begin{figure*}[!htbp]
    \centering
    \includegraphics[width=\linewidth]{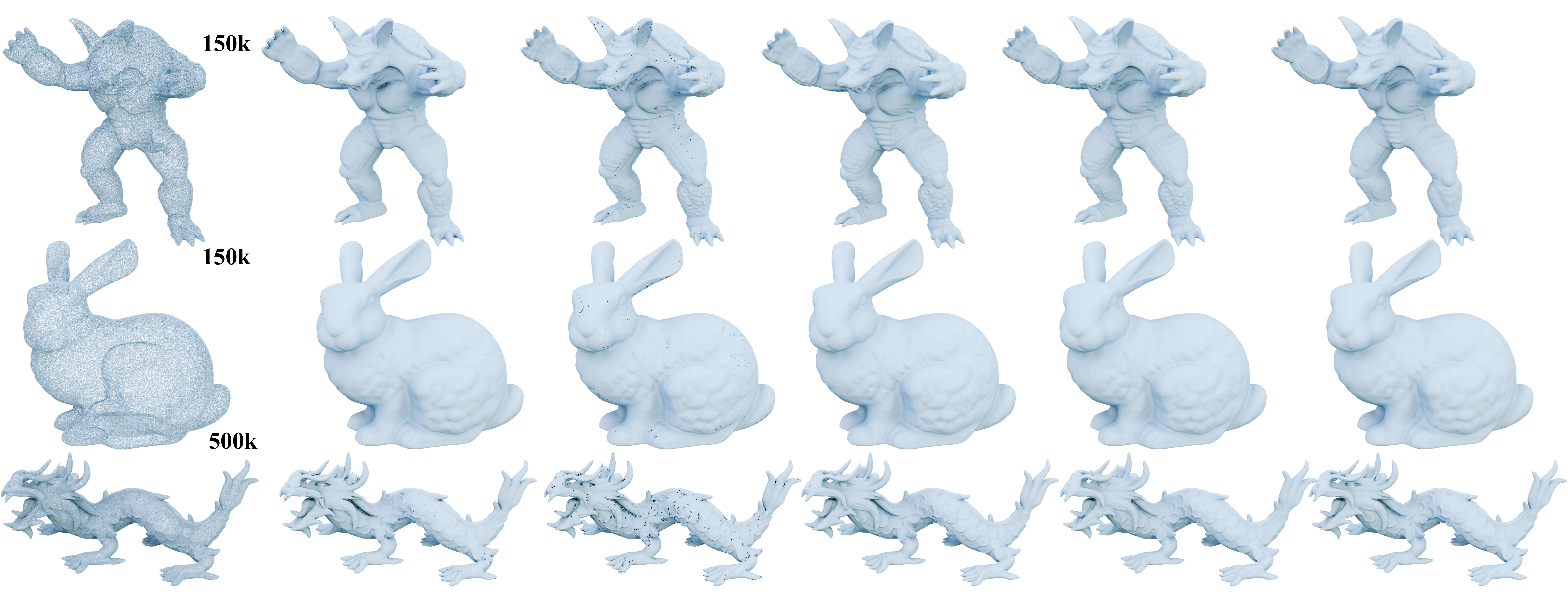}
    \makebox[0.16\linewidth][c]{Input}
     \makebox[0.16\linewidth][c]{CAP-UDF}
     \makebox[0.16\linewidth][c]{GeoUDF}
     \makebox[0.16\linewidth][c]{DEUDF}
     \makebox[0.16\linewidth][c]{S$^2$DF}
     \makebox[0.16\linewidth][c]{Ours}
    \caption{Comparison with UDF-based baselines on graphics benchmarks featuring watertight geometries with fine details. While all methods produce visually plausible reconstructions, our method demonstrates clear advantages in zero level set extraction. For example, on the Dragon model at a resolution of $512^3$, extracting the zero level set takes 10.5, 3.6, 5.8, and 15.2 minutes for CAP-UDF, GeoUDF, DEUDF, and S$^2$DF, respectively. In contrast, our method relies on standard Marching Cubes and requires only 0.13 minutes, significantly outperforming the UDF-based baselines. }
    \label{fig:udf-compare-detail}
\end{figure*}

\begin{figure*}
    \centering
    \includegraphics[width=\linewidth]{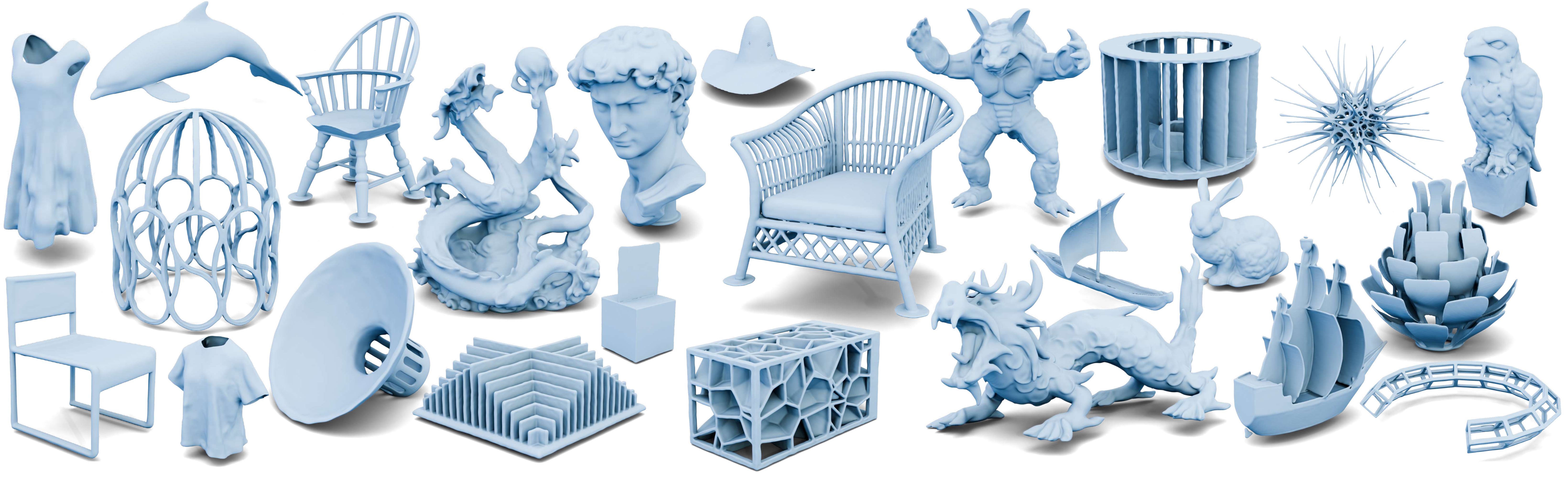}
\caption{MPF gallery: reconstructions across diverse geometries and topologies, including thin structures and open boundaries that challenge SDF-based methods.}

    \label{fig:gallery}
\end{figure*}

\begin{figure}
    \centering
    \includegraphics[width=0.85\linewidth]{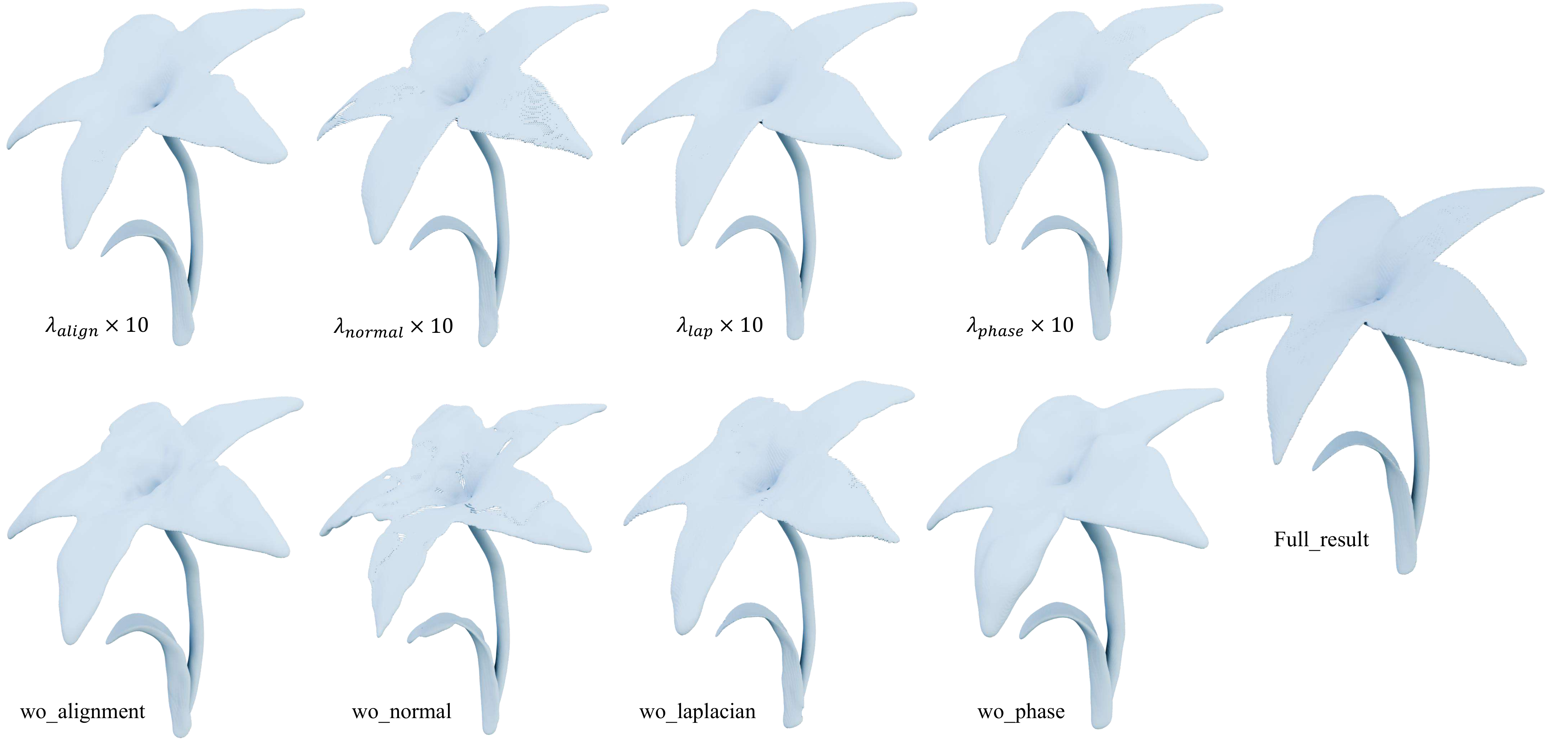}
    \caption{Visual comparison for the ablation study in Table~\ref{tab:ablation}. Removing the normal constraint leads to unstable surface orientations and degraded reconstruction quality, despite only minor changes in CD.}
    \label{fig:ablation_vis}
\end{figure}



\begin{figure}
    \centering
    \includegraphics[width=0.85\linewidth]{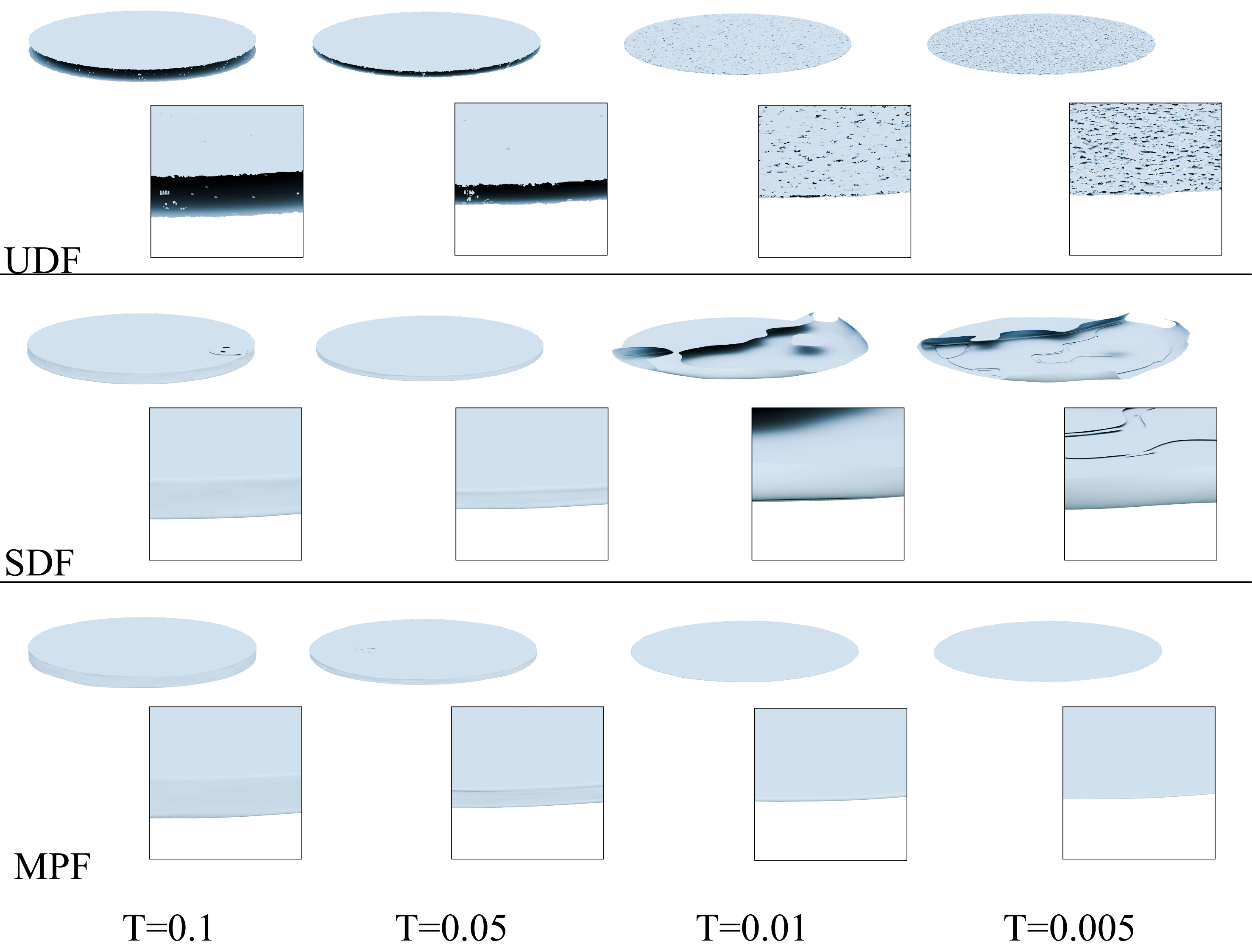}
    \caption{Qualitative comparison of thin-structure reconstruction at varying thickness levels $T$. 
SDF-based methods fail to preserve extremely thin geometry, while UDF-based methods suffer from topological artifacts such as holes and splitting. In contrast, MPF consistently reconstructs accurate and complete thin structures across all scales.}
    \label{fig:thin_thickness_vis}
\end{figure}

\begin{figure}
    \centering
    \includegraphics[width=\linewidth]{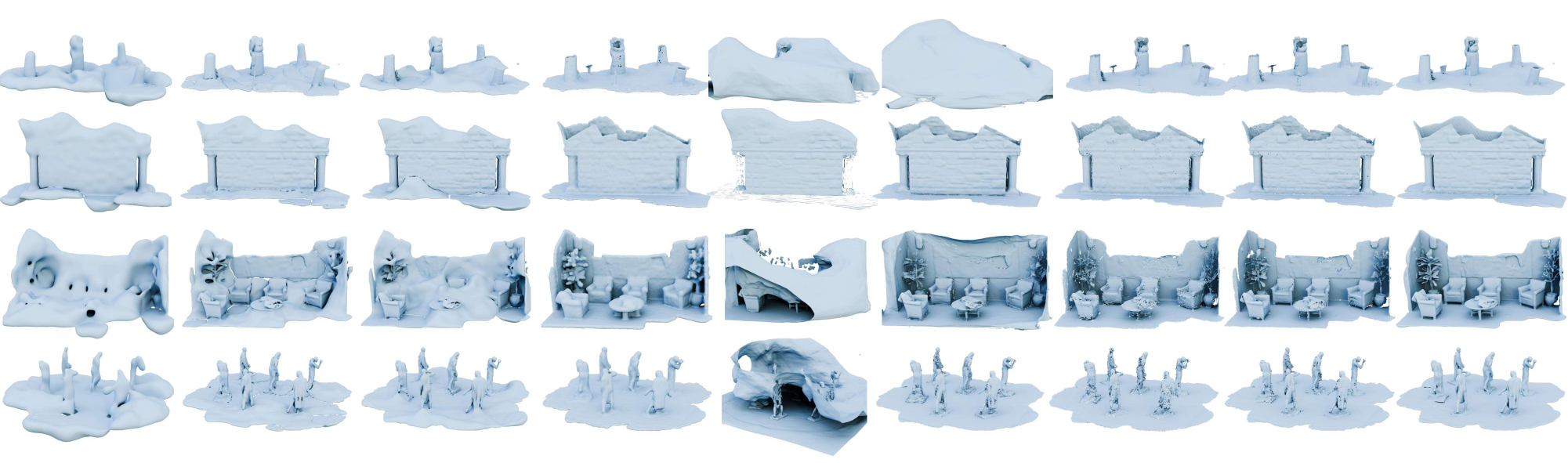}
         \makebox[0.1\linewidth][c]{DiGS}
     \makebox[0.1\linewidth][c]{StEik}
          \makebox[0.1\linewidth][c]{NSH}
     \makebox[0.1\linewidth][c]{PG-SDF}
     \makebox[0.11\linewidth][c]{I-filtering}
      \makebox[0.11\linewidth][c]{CAP-UDF}
     \makebox[0.11\linewidth][c]{DUDF}
     \makebox[0.11\linewidth][c]{GeoUDF}
     \makebox[0.1\linewidth][c]{Ours}
    \caption{Comparison of different methods on the scene datasets. SDF-based methods include DiGS, StEik, NSH, I-filtering and PG-SDF; UDF-based methods include CAP-UDF, DUDF, and GeoUDF. Our method demonstrates superior performance in terms of reconstruction quality.}
    \label{fig:scene_level}
\end{figure}
\begin{figure}
    \centering
    \includegraphics[width=\linewidth]{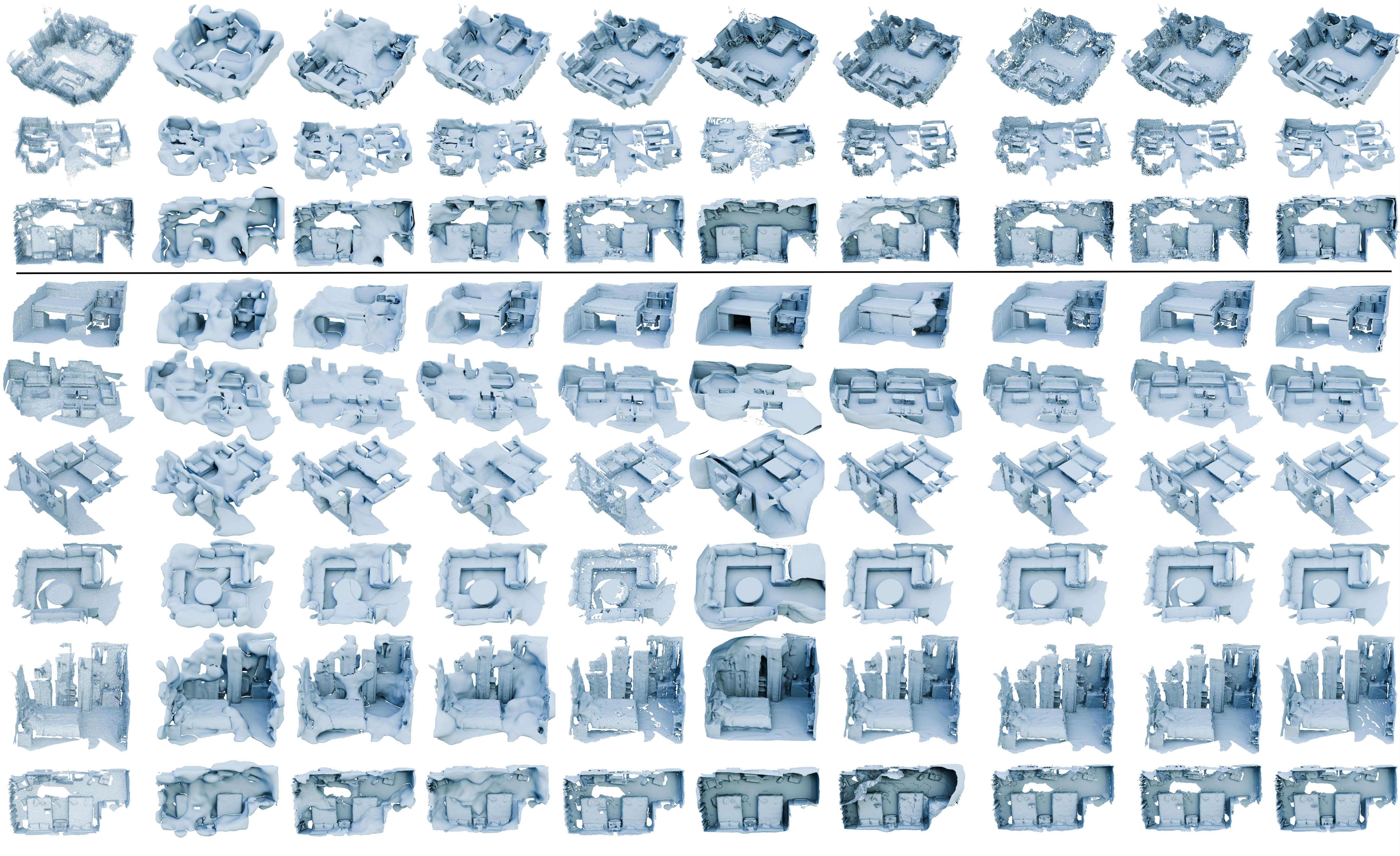}
     \makebox[0.09\linewidth][c]{Input}
     \makebox[0.09\linewidth][c]{DiGS}
     \makebox[0.09\linewidth][c]{StEik}
          \makebox[0.09\linewidth][c]{NSH}
     \makebox[0.09\linewidth][c]{PG-SDF}
     \makebox[0.09\linewidth][c]{I-filtering}
      \makebox[0.1\linewidth][c]{CAP-UDF}
     \makebox[0.1\linewidth][c]{DUDF}
     \makebox[0.1\linewidth][c]{GeoUDF}
     \makebox[0.1\linewidth][c]{Ours}       
    \caption{Comparison of indoor scene reconstruction results. The first three rows show point maps extracted from RGB-D videos in ScanNet, which contain significant noise, while the subsequent rows use points sampled from the official meshes as input. SDF-based methods tend to produce bulging artifacts, I-filtering achieves good results, and UDF-based methods are prone to fragmented surfaces. Our method produces high-quality reconstructions in both cases. Note that I-filtering and CAP-UDF produce a watertight envelope of the scene. For better interior visualization, a cropping post-processing step was performed on the reconstructed manifold.}
    \label{fig:indoorscene}
\end{figure}
\begin{figure}
    \centering
    \includegraphics[width=\linewidth]{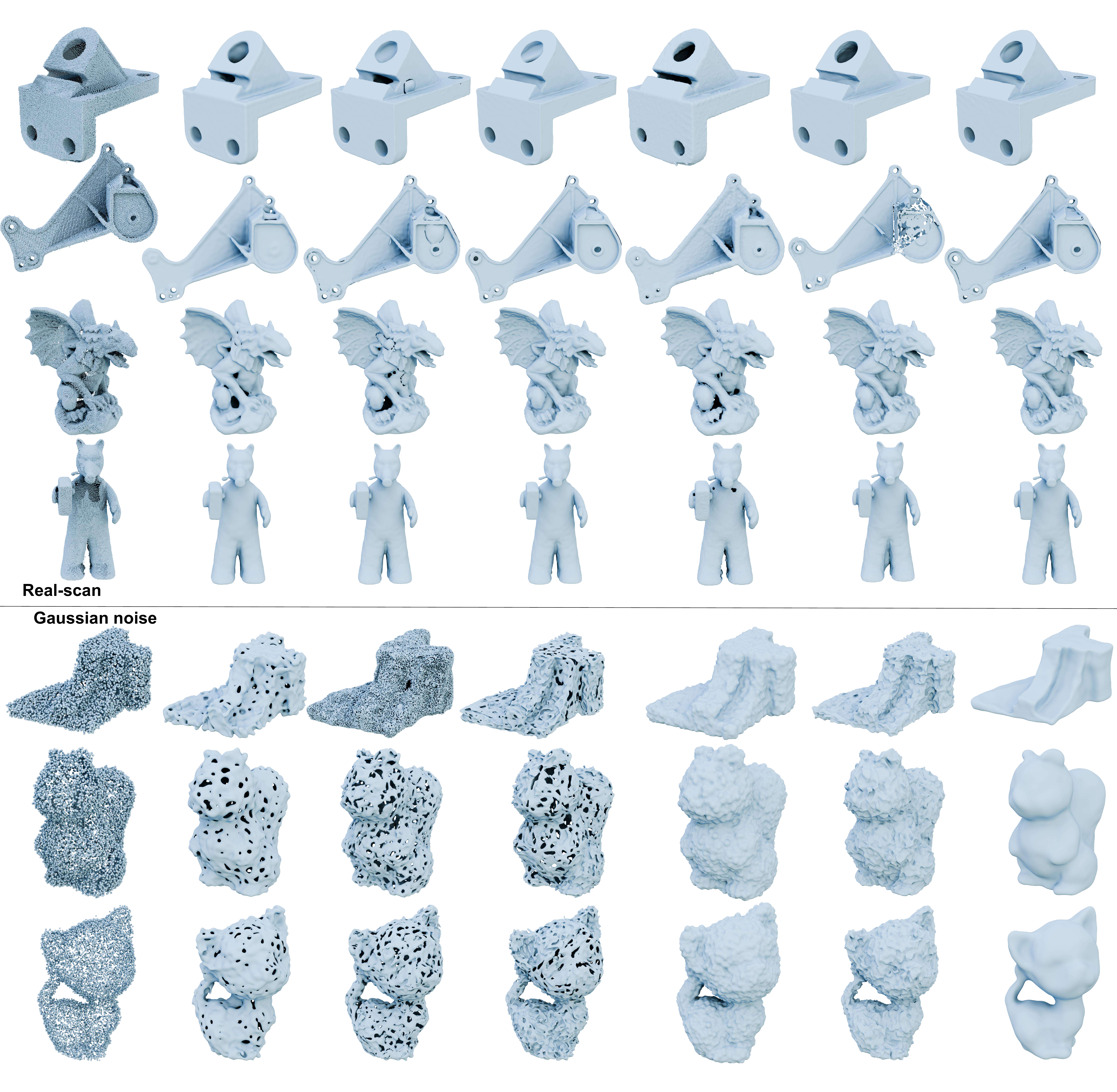}
       \makebox[0.135\linewidth][c]{Input}
     \makebox[0.135\linewidth][c]{DiGS}
     \makebox[0.135\linewidth][c]{StEik}
          \makebox[0.135\linewidth][c]{NSH}
     \makebox[0.135\linewidth][c]{PG-SDF}
     \makebox[0.135\linewidth][c]{I-filtering}
     \makebox[0.135\linewidth][c]{Ours}
    \caption{The top four rows compare different methods on the SRB dataset, which contains partial missing regions and noise/outlier perturbations due to its real-scanned nature. Our method achieves accurate reconstructions despite these challenges. The bottom three rows show reconstructions with added Gaussian noise (0.5\%), highlighting the robustness of our method under noisy conditions. All methods are evaluated using their default settings.}
    \label{fig:realscan+noise}
\end{figure}



\end{document}